\numberwithin{equation}{section}
\newcommand{\measurerestr}{%
	\,\raisebox{-.127ex}{\reflectbox{\rotatebox[origin=br]{-90}{$\lnot$}}}\,%
}
\providecommand{\U}[1]{\protect\rule{.1in}{.1in}}
\newtheorem{theorem}{Theorem}[section]
\newtheorem{corollary}[theorem]{Corollary}
\newtheorem{example}[theorem]{Example}
\newtheorem{lemma}[theorem]{Lemma}
\newtheorem{proposition}[theorem]{Proposition}
\newtheorem{remark}[theorem]{Remark}
\newtheorem{assumption}[theorem]{Assumption}
\providecommand{\keywordsnew}[1]{\textbf{\textit{\noindent  Keywords and phrases. }} #1}
\newcommand{\veps}{\varepsilon}
\newcommand{\e}{\varepsilon}
\newcommand{\R}{\mathbb{R}}
\newcommand{\HH}{\mathcal{H}}
\DeclareMathOperator{\Prob}{\mathbb{P}}
\newcommand{\M}{\mathcal{M}}
\newcommand{\red}{\color{black}}
\newcommand{\blue}{\color{black}}
\definecolor{mygreen}{rgb}{0.1,0.75,0.2}
\newcommand{\nc}{\normalcolor}
\newcommand{\E}{\mathbb{E}}
\renewcommand{\P}{\mathbb{P}}
\newcommand{\Per}{\text{Per}}
\newcommand{\bnu}{\mathbf{\nu}}
\newcommand{\nrho}[1]{w_{#1}\rho_{#1}}
\newcommand{\trho}{\rho}
\title[Geometric flows for adversarial classification]{Adversarial Classification: Necessary Conditions and Geometric Flows}
\author{Nicol\'as Garc\'ia Trillos}
\address{Department of Statistics, University of Wisconsin, Madison, Wisconsin, USA}
\email{garciatrillo@wisc.edu}
\author{Ryan Murray}
\address{Department of Mathematics, North Carolina State University, Raleigh, NC, USA}
\email{rwmurray@ncsu.edu}
\thanks{ NGT was supported by NSF grant DMS 2005797}
\begin{document}

\maketitle

%

\begin{abstract}%
We study a version of adversarial classification where an adversary is empowered to corrupt data inputs up to some distance $\e$, using tools from variational analysis. In particular, we describe necessary conditions associated with the optimal classifier subject to such an adversary. Using the necessary conditions, we derive a geometric evolution equation which can be used to track the change in classification boundaries as $\e$ varies. This evolution equation may be described as an uncoupled system of differential equations in one dimension, or as a mean curvature type equation in higher dimension. In one dimension, and under mild assumptions on the data distribution, we rigorously prove that one can use the initial value problem starting from $\e=0$, which is simply the Bayes classifier, in order to solve for the global minimizer of the adversarial problem for small values of $\veps$. \blue In higher dimensions we provide a similar result, albeit conditional to the existence of regular solutions of the initial value problem. In the process of proving our main results we obtain a result of independent interest connecting the original adversarial problem with an optimal transport problem under no assumptions on whether classes are balanced or not. \nc Numerical examples illustrating these ideas are also presented.
\end{abstract}
\keywordsnew{adversarial learning, classification, optimal transportation, geometric flow, differential equations, perimeter regularization}

\section{Introduction}
 In many learning settings, and in particular in the setting of deep learning, classifiers are known to behave poorly when exposed to adversarial examples. This has led to a significant body of work studying both the construction of specific adversaries and possible algorithms defending against them. Furthermore, the notion of pitting learners versus adversaries has stimulated significant new algorithms such as generative adversarial networks. One may view such adversarial frameworks as one possible notion of robustness of a learning algorithm, a critical concern in many applications.

In this work we consider the problem of optimal adversarial learning and aim at connecting it with a family of geometric evolution equations. The evolution equations that we derive answer the question: how would the decision boundary of a robust classifier change infinitesimally, if the adversary was to infinitesimally increase its power to perturb the data? Besides establishing new theoretical understanding for adversarial classification linking it with a set of geometric equations of surface diffusion type (similar to the ones describing the dynamics of interfaces of droplets of viscous fluids), our aim is also to explore computational alternatives to solve adversarial classification problems. At the theoretical level, a standard un-robust classification problem  admits an explicit solution (i.e. the Bayes classifier), while adversarial problems typically do not have explicit solutions and in general are quite challenging from a numerical point of view.

While the general perspective that we have described above can be studied in a variety of settings, here we will study a concrete model for adversarial binary classification. In particular, we assume that a binary classifier is subject to a data perturbing adversary: namely, that for any future input $x \in \R^d$ and associated output $y \in \{0,1\}$, the adversary may select a new associated input $\tilde x = x + \eta$ in order to disrupt a classifier.  The adversary is assumed to possess limited power, namely that $\|\eta\|_2 < \e$, but is assumed to have knowledge of the classifier that has been chosen.  A basic question is how such an adversary affects optimal classifiers. Various works have posited that adversaries do have an effect on classifiers, and that they can induce regular decision boundaries. Heuristically, from a geometric perspective this is natural, as boundaries with more surface area offer more opportunity for adversaries to disrupt classifiers. However, rigorous justification of this assertion is, to this point, unavailable. Several recent works have derived sufficient conditions for the adversarial learning problem with such an adversary. In particular, \cite{bhagoji2019lower,pydi2019adversarial} both derive a duality principle related to the optimal adversarial classifier. They use this to derive bounds on the effect on the loss of such an adversary. Such a duality principle provides an embedding of the optimal adversarial classification problem as an optimal transportation problem.


As mentioned earlier, despite the potential difficulty of solving the optimal adversarial classification problem for a fixed $\e>0$ via optimization, we notice that the solution of the problem for $\e=0$ is well-known and does not require optimization: the optimizer is the classical Bayes classifier. Namely, if we define
\[
  \nrho{0}(x) = \P(X \in dx, Y = 0),\qquad \nrho{1}(x) = \P(X \in dx, Y=1),
\]
then the Bayes classifier given by
\[
  u_0(x) = \begin{cases} 1 &\text{ if } \nrho{1}(x) > \nrho{0}(x) \\ 0 & \text{ otherwise} \end{cases}
\]
is known to be a minimizer of the un-robust risk. In the one dimensional case we expect to be able to write $u_0(x) = \mathds{1}_E$ for a set of the form $E = \cup_{i=1}^K [a_i(0),b_i(0)]$, where the ``$0$''  indicates that $\e = 0$.
The central idea of this work is to derive evolution equations for the decision boundary of an optimal classifier as $\e$ increases from zero, in the regime where we may construct optimal classifiers as a perturbation of the explicit Bayes classifier. This is achieved by deriving local necessary conditions (i.e. Euler-Lagrange type equations) for optimal adversarial classifiers for any fixed $\e$ \eqref{eqn:1D-Necessary}. In particular, in the one dimensional case, these necessary conditions take the form of the \emph{algebraic equation}
\[
  \nrho{1}(b_i(\e) - \e) = \nrho{0}(b_i(\e) + \e).
\]
Analogous necessary conditions are derived for the $a_i$. These necessary conditions are then used to derive  evolution equations \eqref{eqn:evolutiod-1d},\eqref{eqn:evolutiod-1db}. In particular, in one dimension this necessary condition takes the form of a decoupled, \emph{ordinary differential equation} (ODE)
\[
  \frac{db_i}{d\e} = -\frac{\nrho{0}'(b_i(\e) + \e) + \nrho{1}'(b_i(\e)-\e)}{\nrho{0}'(b_i(\e)+\e) - \nrho{1}'(b_i(\e) - \e)},
\]
with an analogous equation for the $a_i$. We remark that the resulting equation involves a sort of weak non-local algebraic condition, which in turn means the evolution equation includes a weak non-local forcing term. The evolution equation is ultimately a relatively simple decoupled ODE, which may then be solved directly using numerical solvers, with very modest computational effort and \emph{no optimization}. This gives an easily computed candidate solution to the optimal adversarial classification problem for $\e$ sufficiently close to zero.

As the equations that we derive are based upon necessary conditions, a natural question is whether solutions to the ODE indeed correspond to global minimizers of the optimal adversarial classification problems. Following  the duality principle derived in \cite{bhagoji2019lower}\cite{pydi2019adversarial} (which we extend here to include unbalanced classes, \blue and which holds under arbitrary metrics constraining adversarial perturbations and in arbitrary dimension\nc), we derive the following theorem (stated informally):

\begin{theorem}
  In one dimension, under mild technical assumptions on $\nrho{0},\nrho{1}$ and the associated Bayes classifier, there exists an interval $[0,\veps_0]$ such that the solution of the optimal adversarial classification problem is given by the solution to the decoupled differential equations \eqref{eqn:evolutiod-1d},\eqref{eqn:evolutiod-1db}  with initial values given by the decision boundary of the Bayes classifier (when $\e = 0$).
  \label{thm:informal}
\end{theorem}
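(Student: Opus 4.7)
The plan is to combine three ingredients: (1) local well-posedness of the decoupled ODE system near $\e = 0$; (2) verification that ODE solutions preserve the algebraic necessary condition for all small $\e$; and (3) a strong-duality argument via the optimal transport reformulation mentioned in the abstract, used to upgrade the first-order necessary condition to a certificate of global optimality.

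First, I would establish short-time existence and uniqueness for the ODE. The ``mild technical assumptions'' must include a transversality hypothesis at each Bayes boundary point, namely $\rho_0'(b_i(0)) \neq \rho_1'(b_i(0))$ and similarly at each $a_i(0)$, which guarantees that the denominator $\rho_0'(b_i(\e)+\e) - \rho_1'(b_i(\e)-\e)$ does not vanish at $\e = 0$. By continuity of $\rho_0', \rho_1'$ (requiring a modest regularity hypothesis on the densities), the right-hand side is locally Lipschitz in a neighborhood of $(0, b_i(0))$, so Picard--Lindel\"of provides a unique $C^1$ solution on some interval $[0,\e_0]$. Choosing $\e_0$ smaller than half of the minimum gap between consecutive Bayes boundary points ensures that the intervals $[a_i(\e), b_i(\e)]$ remain disjoint and that the $\e$-neighborhoods of distinct boundary points do not interact.

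Second, I would verify that along this ODE solution, the algebraic condition $\rho_1(b_i(\e) - \e) = \rho_0(b_i(\e) + \e)$ holds for all $\e \in [0,\e_0]$. Setting $F(\e) := \rho_1(b_i(\e) - \e) - \rho_0(b_i(\e) + \e)$, one has $F(0) = 0$ by definition of the Bayes classifier, and a direct differentiation shows that the ODE is engineered exactly so that $F'(\e) \equiv 0$; the same argument handles the $a_i$ with roles swapped. Thus the candidate classifier $u_\e = \mathds{1}_{E(\e)}$ with $E(\e) = \bigcup_i [a_i(\e), b_i(\e)]$ satisfies the Euler--Lagrange condition \eqref{eqn:1D-Necessary} at every $\e \in [0, \e_0]$.

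Third, and most critically, I would establish global minimality using the optimal transport duality extended earlier in the paper to the unbalanced case. Concretely, from $u_\e$ I would construct an explicit dual-feasible transport plan between (restrictions of) $\rho_0$ and $\rho_1$: inside each boundary strip $[b_i(\e) - \e, b_i(\e) + \e]$ one couples mass from $\rho_1$ on the left half with mass from $\rho_0$ on the right half by monotone rearrangement, and symmetrically at each $a_i$-strip. The algebraic condition from step two is precisely what makes the marginals of this plan match the adversarial mass budget, while the strip having width exactly $2\e$ guarantees every coupling has cost at most $\e$, so the plan is admissible for the transport problem. A direct computation then shows that the cost of this plan equals the adversarial risk of $u_\e$, so primal and dual values coincide and $u_\e$ is a global minimizer.

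The main obstacle is step three: since the adversarial functional is non-convex in $E$, criticality is far from optimality, and one must genuinely exploit the duality. The subtle points are (a) ensuring admissibility of the constructed transport plan, which uses the width-$2\e$ strip geometry; (b) verifying that the plan's cost equals the adversarial excess over the Bayes risk, which amounts to a careful bookkeeping using the algebraic condition at the strip endpoints; and (c) ruling out, for small $\e$, minimizers whose topology differs from a union of $K$ intervals, which can be done either by a compactness/continuity argument using $\Gamma$-convergence as $\e \to 0$ or by noting that any dual-optimal coupling must be concentrated near the Bayes boundary and therefore corresponds to such a topology. Once these are in place, strong duality closes the argument on $[0,\e_0]$.
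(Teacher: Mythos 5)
Your overall architecture --- local well-posedness of the ODEs, preservation of the algebraic condition \eqref{eqn:1D-Necessary} along solutions, and a certificate of global optimality via the unbalanced duality of Proposition \ref{prop:Duality} with an explicitly constructed transport plan --- is the same route the paper takes in Theorem \ref{thm:GlobalOptimality}, and your first two steps are fine. The gap is in step three, which is the heart of the matter. You build the zero-cost part of the plan by monotone rearrangement of the $w_1\rho_1$ mass on the left half of the strip $[b_i(\e)-\e,\,b_i(\e)+\e]$ onto the $w_0\rho_0$ mass on the right half, asserting that the algebraic condition makes the marginals match. It does not: \eqref{eqn:1D-Necessary} is a pointwise equality of densities at the two strip endpoints and gives no relation between $\int_{b_i-\e}^{b_i} w_1\rho_1\,dx$ and $\int_{b_i}^{b_i+\e} w_0\rho_0\,dx$ (or any other masses inside the strip), so generically the proposed rearrangement is not a coupling of the masses that must be moved. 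In the paper the matching regions are instead determined by integral balance conditions --- the points $r_i^+,\tilde r_i^+$ of \eqref{eqn:BalancingCondition} and its analogues --- and these regions extend a distance of order $\delta$ beyond the strip. Once the matching region leaves the strip, the bound $|t-\phi_{b_i}(t)|\leq 2\e$ is no longer automatic from the geometry; it is exactly here that the non-degeneracy hypothesis $w_0\rho_0'\neq w_1\rho_1'$ of Assumption \ref{assump:densities}, upgraded by $C^1$ continuity to a local comparison of derivatives near $b_i(0)$, is combined with \eqref{eqn:1D-Necessary} (as in \eqref{eqn:AuxDensities1}) to show the monotone matching never moves mass farther than $2\e$. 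Your proposal invokes transversality only for ODE well-posedness and so misses its second, essential role; as written, admissibility of your plan is unproven, and the half-strip construction fails whenever the half-strip masses differ, which is the generic case.

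Two smaller points. The admissibility threshold in the dual cost is $d(x_1,x_2)\leq 2\e$, not $\e$ (the cost is $\mathds{1}_{\{d(x_1,x_2)>2\e\}\cup\{y_1\neq y_2\}}$), so ``cost at most $\e$'' should read ``distance at most $2\e$''; this slip is harmless on its own. More substantively, your item (c) --- ruling out competitors of different topology via $\Gamma$-convergence or a concentration argument --- is unnecessary: once a feasible plan achieving the dual value is exhibited, Remark \ref{rem:Duality} certifies that $A^*_\e$ minimizes $R_\e$ over all measurable sets, whatever their structure. The only bookkeeping needed beyond the boundary matchings is to couple the residual mass arbitrarily (it is charged at most its total mass, which the balance conditions identify with the dual objective) and to leave the remaining mass fixed in place, as in Step 4 of the paper's proof.
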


Subsequently, we turn our attention to studying the problem in higher dimensions, where decision boundaries are now expressed as hyper-surfaces. For simplicity, we focus our attention on the setting where the adversarial constraints are given in terms of the standard Euclidean norm, which we denote by $|\cdot |$.  After deriving necessary conditions, which again take the form of weakly non-local algebraic equations \eqref{eqn:nD-necessary}, we derive an evolution equation for the decision boundary as $\e$ varies \eqref{eqn:d-dim-evolution}. \blue The well-posedness of this geometric evolution equation is not immediately clear, but under the assumption that regular solutions do exist we can also prove that the solution of the evolution equation characterizes global minimizers on some interval $[0,\e_0]$, see Theorem \ref{thm:Multid}. Using a Taylor expansion around $\veps=0$, we can also identify approximate geometric evolution dynamics which are more interpretable. In particular, we derive the evolution equation \nc
\eqref{eqn:d-dim-evolution-approx}, which may be written as follows:
\begin{equation}
  v(x) = -\frac{ \nabla \rho \cdot \nu + \rho \sum_i \kappa_i}{  (\nabla \nrho{1} - \nabla \nrho{0})\cdot \nu},
\label{eqn:WeightedMeanCurvature}
\end{equation}
where here $v$ represents the normal velocity (with respect to $\e$) of a point on the decision boundary of the Bayes classifier, $\nu$ is the normal vector to the boundary, $\kappa_i$ denote the principal curvatures \blue (see the Appendix for a definition) \nc of the boundary, and $\rho = \nrho{0} + \nrho{1} = \P(X \in dx)$. \blue Conceptually, the vector field $v \nu$ describes the infinitesimal change of the Bayes classifier (i.e. the minimizer of the problem when $\veps=0$) as the adversary increases its power. Evolution equation \eqref{eqn:WeightedMeanCurvature} takes the form of a weighted \textit{mean curvature flow} plus a biasing term (the biasing term is driven by the gradient of the distribution $\rho$). Mean curvature flow is an important geometric flow with many convenient properties, including a comparison principle, and is known in many instances to induce significant regularity to surfaces. In particular, mean curvature flow may be seen, within an appropriate function space, as a gradient flow of the perimeter functional (in particular a flow that aims at minimizing surface area). Equation \eqref{eqn:WeightedMeanCurvature} thus suggests that as $\e$ increases, the corresponding optimal decision boundaries become shorter and smoother, supporting previous work on the topic. In addition, at least for the unweighted case, there are powerful and efficient numerical algorithms to compute mean curvature flows (i.e. the MBO scheme \cite{merriman1992diffusion}).

\blue To be more concrete about the connection between equation \eqref{eqn:WeightedMeanCurvature} and perimeter minimization problems, let us consider the family of variational problems:
\begin{equation} 
\min_{E \subseteq \R^d} \{  R(\mathds{1}_E) + \veps \Per_\rho(E) \}
\label{eqn:PerimeterMinimization}
\end{equation}
indexed by $\veps\geq 0$, where $R$ denotes the standard average misclassification error and $\Per_{\rho}$ represents the weighted (by $\rho$) perimeter of the set $E$, which, for sets $E$ with smooth boundary $\partial E$, can be written as:
\[ \Per_\rho(E) := \int_{\partial E} \rho(x) d\mathcal{H}^{d-1}(x);   \]
in the above, $\mathcal{H}^{d-1}$ is the $d-1$ dimensional Hausdorff measure. Problem \eqref{eqn:PerimeterMinimization} can be interpreted as a regularized risk minimization problem over binary classifiers, where $\Per_\rho$ plays the role of an \textit{explicit regularizer}, in this case penalizing binary classifiers when they have large decision boundaries. Problem \eqref{eqn:PerimeterMinimization} is relevant in the context of adversarial learning because, as we illustrate formally in Section \ref{sec:Perimeter}, Equation \eqref{eqn:WeightedMeanCurvature} also describes the infinitesimal change of solutions to the family of problems \eqref{eqn:PerimeterMinimization} (indexed by $\veps$) when starting at $\veps=0$ (i.e. when starting with the Bayes classifier, which is the minimizer of the risk $R$.) From this observation we can deduce that the instantaneous regularization effect that the adversary has on the Bayes classifier is the same as the infinitesimal regularization effect enforced by explicit perimeter regularization. This observation provides a novel geometric interpretation for the role of adversaries in binary classification: they are approximately equivalent to an explicit perimeter penalization. This line of research has been further explored by the authors in their work with Bungert: \cite{BungertNGT-RM}, where they prove an equivalence between adversarial learning for binary classification and regularized risk minimization for all $\veps>0$ (and not just infinitesimally around $\veps=0$) at the expense of having to modify the notion of perimeter used to measure the size of the boundary of a set. \nc

\blue
In summary, in this paper we take a novel approach and view an adversarial problem as an ensemble of problems indexed by a parameter controlling the ability of an adversary to perturb the data. The main motivation for doing this is to provide new theoretical insights into the role played by adversaries in the training of binary classifiers. In concrete terms, we discuss properties of the evolution equations that track solutions to an ensemble of adversarial problems, starting from an un-robust optimal classifier. These evolution equations take the form of geometric equations. For the specific adversarial model that we study here the adversarial problem and its corresponding geometric evolution equations can be connected to a dual optimal transport problem, which is of interest on its own right and that extends earlier work in \cite{pydi2019adversarial} where the case of balanced labels $(w_0= w_1)$ was considered. In this paper, the connection to optimal transport is used to certify global optimality of the decision boundaries generated by the geometric flows. 
\nc

The remainder of this work is organized as follows. In Section \ref{sec:lit} we review some relevant literature. In Section \ref{sec:setup} we describe concretely the model that we consider. In Section \ref{sec:duality} we review and extend the duality principle related to the model. In Sections \ref{sec-1D} and \ref{sec:Global} we derive the main results in one dimension. Subsequently, Section \ref{sec:high-D} formally studies the higher-dimensional case. Finally, Section \ref{sec:conclusion} concludes by summarizing our work and describing a number of promising future directions.

\section{Related literature}\label{sec:lit}

\subsection{Adversarial learning}
A significant body of recent work considers the problem of adversarial learning; we only aim to provide a review of the most relevant references. Early works focused on the existence of adversarial examples in deep learning \cite{szegedy2013intriguing,goodfellow2014explaining}. These examples typically involved adding carefully structured noise to images in ways that was imperceptible to humans, but which led to gross classification errors for fitted neural networks. A number of different algorithms were then developed for both constructing adversarial attacks and defending against them; these models are distinct from but related to the one we consider in this work: \blue one influential example from this literature is \cite{madry2017towards}, which established important benchmarks for both adversarial attack and defense. \nc Several works advocate for attempting to differentiate between ``natural'' and ``adversarial'' inputs \cite{gong2017adversarial,grosse2017statistical,metzen2017detecting}, while other works describe the ability of adversaries to circumvent such a defense \cite{carlini2017adversarial,athalye2018obfuscated}. A parallel line of work posed a construction of improved classifiers by posing a game in which adversaries and classifiers iteratively try to best one another: this is the underlying framework for generative adversarial networks \cite{goodfellow2014generative}. 

One work along this vein which relates closely with our work is \cite{moosavi2019robustness}. That work observes that many boundaries obtained via robust classification are empirically observed to have smaller curvature. They then propose including a regularization term in classification that penalizes boundaries with higher curvature. Our work complements theirs in that we directly obtain a mean curvature in our d-dimensional evolution equation, indicating that the curvature indeed plays an explicit role in how decision boundaries change upon introducing stronger adversaries. While we do not explicitly prove that lower curvature is induced in our adversarial setting, the evolution equation implicitly suggests that such is the case, and a rigorous connection between these notions is a topic of current work.

The fact that simple defenses were often insufficient against adversaries led to a number of theoretical works regarding the inherent difficulty of finding classifiers that are robust to adversaries. For example, \cite{bubeck2019adversarial} suggests that in some settings computation is the primary bottleneck in constructing adversarially robust classifiers. \cite{gilmer2018adversarial,mahloujifar2019curse,shafahi2018adversarial} all highlight how high dimensional geometry induces inherent limitations in the ability to avoid adversarial examples. \cite{ilyas2019adversarial} argues that adversarial examples are often based upon human derived notions of similarity that are incompatible with the geometry and training that occurs in deep learning. \blue Finally, the interplay between the geometry of the types of perturbations used in measuring adversarial attacks was explored in \cite{khoury2018geometry}. That work demonstrated that adversarial robustness with respect to $\ell^\infty$ norm perturbations is not equivalent to $\ell^2$ norm perturbations, and that under a manifold hypothesis adversarial examples may be a consequence of the the complicated nature of high-dimensional geometry. \nc

While the above works highlight the difficulty of completely avoiding adversarial examples, they do not study the ability of classifiers to mitigate the effects of adversarial examples. One such framework for mitigating, on average, these effects is the optimal adversarial classification problem that we study here. Several variants of this problem have been previously studied. One variant permits the adversary to perturb the distribution of $(x,y)$'s that are inputted \cite{blanchet2019robust,gao2017wasserstein}; in \cite{blanchet2019robust} a family of robust regression and classification problems are seen to be equivalent to a series of regularized risk minimization problems. A second variant, considered in both \cite{bhagoji2019lower,pydi2019adversarial}, studies the data perturbing adversary. In particular, those works derive a duality principle relating the optimal classification problem for balanced classes to a optimal coupling or transportation problem. \cite{pydi2019adversarial} uses Strassen's theorem from the theory of optimal transportation \cite{villani2003topics} to derive a duality principle, and demonstrates that minimizers of the adversarial problem may be taken to be closed sets. This may be seen as an initial step towards proving that optimal adversarial classifiers are indeed smoother than ones without adversaries. \blue These works have focused on the sufficient conditions associated with duality principles, but to our knowledge there is no work deriving the necessary conditions associated with optimal decision boundaries of adversarially robust classifiers.

Tracking the effect of a regularization parameter on optimal solutions of a statistical problem has been studied in various contexts. For example, the evolution of optimal solutions of $\ell_1$ regularized regression problems (i.e. Lasso) were studied in \cite{SqRtLasso}. More recently, in the context of parametric adversarial learning, \cite{javanmard2020precise} studied the tradeoff between accuracy and adversarial robustness as a function of ``$\e$''. In that work the optimal solutions admit direct representation formulas, and hence one can directly describe the evolution of the optimal classifier. In contrast, our work focuses on non-parametric classifiers, and to our knowledge no other works attempt to describe the evolution, in terms of a differential equation, of classification boundaries as a function of the adversarial power.
\nc

Finally, it is worth mentioning that other notions of classification robustness have been introduced in the literature \cite{Astuteness}. Similar questions to the ones explored in this paper can also be studied under the setting proposed in that work.

\subsection{Geometric flows and PDE methods in learning }

Our work also draws upon ideas from geometric evolutions, and more generally variational problems. Mean curvature flow is well-studied from a theoretical standpoint, in particular as a gradient flow of the perimeter. Desirable properties of this flow, such as comparison principles, and local regularity theorems, are available in \cite{ecker2012regularity}. High fidelity numerical approximations are also available \cite{merriman1992diffusion}. Our evolution equation is also not unrelated to non-local versions of curvature flow, which also are a topic of significant current interest \cite{chambolle2015nonlocal}.

\blue
 
In recent years, there has also been a growing interest in using the ideas and techniques from the analysis of interfacial flows, to construct new algorithms in data analysis.  These algorithms arise as iterative schemes to solve optimization problems closely related to graph-based supervised, unsupervised, and semi-supervised learning; see \cite{CalatroniSch,MBOSignedNetworks,CommunityDetection,AuctionDynamics, MBOClassifImageProcessing,Merkurjev2018ASH,VanGennipGuillen} and references within. 
\nc

\section{Problem setup}\label{sec:setup}

Let $\nu$ be a Borel probability measure on $\R^d \times \{ 0,1\}$ representing a data distribution for pairs $(x,y)$ where $x$ is a feature vector and $y$ an associated label. Let $(X,Y) \sim \nu $. We assume that the conditional distribution of $X$ given $Y=0$ takes the form $\rho_0 dx$, while the conditional distribution of $X$ given $Y=1$ equals $\rho_1 dx$, for two density functions $\rho_0, \rho_1$ that are assumed to satisfy certain regularity and non-degeneracy properties that we will make precise later on (for example see Assumptions \ref{assump:densities} for the one dimensional setting). We use $\rho dx$ to denote the marginal distribution of $X$. Notice that $\rho$ can be expressed as
\[\rho= w_0 \rho_0 + w_1 \rho_1, \] 
where $ w_0= \Prob(Y=0) $ and $w_1 = \Prob(Y=1)$. We let $$\mu(x) := \P(Y = 1 | X =x)$$ represent the conditional probability (or mean) of the label variable $Y$ given $X$. \blue Our conventional notation throughout the paper is that $\rho_i(z+w)$ is always meant to denote $\rho_i$ evaluated at $z+w$, while any multiplication of $\rho_i$ by $(z+w)$ will be denoted using $\rho_i \cdot (z+w)$. We notice that as a consequence of Bayes' theorem $\mu$ may be written using
	\[  \mu(x) = \Prob(Y=1) \cdot \frac{\rho_1(x)}{\rho(x)} =  \frac{w _1\rho_1(x)}{\rho(x)}.\]
\nc

The classical classification problem seeks to minimize the functional
\[
R(f) = \E(\ell(f(x),y)) = \int \ell(f(x),y) \,d\bnu(x,y)
\]
over some class of functions $f \in \mathcal{F}$. Usually, one is required to select $f = \mathds{1}_A$ for some Borel set $A$. Of particular importance is the case when $\ell(f(x),y) = \mathds{1}_{f(x) \not = y}$ (known as the 0-1 loss), where one may actually minimize over the class of $L^1$ functions, and where minimizers of the form $\mathds{1}_A$ always exist.  In particular, the function
\[
u_B(x) = \begin{cases} 1\quad &\text{ if } \mu(x) \geq 1/2 \\ 0 &\text{ otherwise} \end{cases}
\]
known as the \emph{Bayes classifier}, is a minimizer to the 0-1 loss problem. In short, at least from a  theoretical perspective, the optimization of the risk functional $R$ relative to 0-1 loss admits a closed form solution.

In the adversarial classification problem, one supposes an adversary that is able to modify incoming data points. In particular, in this paper we imagine that the adversary is allowed to shift any data point $x$ with label $y$ to a nearby point $g(x,y)$ so that $|x-g(x,y)| \leq \e$. Here $\veps$ is a parameter that describes the power of the adversary: the larger the value of $\veps$, the more the adversary can perturb the data. In this setting, one seeks to build a classifier that minimizes the robust risk
\[
  R_\e(f) := \blue \sup_{g : \sup_x d(g(x,y),x) \leq \e} \int \ell(f(g(x,y)),y) \,d\bnu(x,y), \nc
\]
which factors in the action of the adversary.
Notice that in the above model, the adversary can use information of a feature vector $x$ as well as of its corresponding label $y$ in order to decide on the new features for that data point.  This model has been studied previously in \cite{bhagoji2019lower,pydi2019adversarial} where  interesting connections with optimal transport problems have been established. In this paper we revisit these connections and extend them.    

In order to analyze the minimization of the above robust risk, we first must characterize the $g$ which achieves the maximum risk for a given $f= \mathds{1}_A$. We begin by defining the distance between a point and a set $A \in \M(\R^d)$ via
\[
  d(x,A) := \blue \inf_{y \in A} d(x,y), \nc
\]
where $\M(\R^d)$ denotes the Borel sets of $\R^d$. For convenience, we also define a signed distance via
\[
\tilde d_A(x) = \begin{cases} d(x,A) &\text{ if } x \notin A \\ -d(x,A^c) &\text{ if } x \in A. \end{cases}
\]
The maximization problem for the adversary admits a direct representation in terms of this signed distance. In particular, we notice that for $f = \mathds{1}_A$,  if $|\tilde d_A(x)|\leq \e$, then the adversary is free to select an arbitrary response at the point $(x,y)$ regardless of the value of $y$. On the other hand, if $|\tilde d_A(x)|>\e$ the adversary is unable to modify the label $f(x)$ by moving the inputted point by distance $\e$. This information may be encoded by rewriting our objective functional $R_\veps $ in the form:
\begin{equation*}
  R_\e(\mathds{1}_A) = \int_{\tilde d_A(x)<-\e} \ell(1,y) \,d\nu(x,y) + \int_{\tilde d_A(x) > \e} \ell(0,y) \,d\nu(x,y) + \int_{|\tilde d_A(x)| < \e} \max_{z\in\{0,1\}} \ell(z,y) \,d\nu(x,y).
\end{equation*}
We notice that when $\e = 0$ this functional reduces to the standard, non-adversarial, loss.

In order to simplify notation, we define, for any $s \in \R$, the set 
\begin{equation}\label{eqn:dilation-erosion-def}
\blue  A^s:=\{x \in \R^d : \tilde{d}_A(x) \leq s\}. \nc
\end{equation}
Furthermore, in what follows we will always consider the 0-1 loss function. In that case, we may rewrite our objective function as follows:
\begin{align*}
  R_\e(\mathds{1}_A) &=  \int_{A^{-\veps}} \nrho{0} dx +  \int_{(A^{\veps})^c} \nrho{1} dx + \int_{|\tilde d_A(x)| \leq \veps} \rho(x)dx \\
   &\blue= \int_{A^{-\veps}} \nrho{0} dx  +  \int_{(A^{\veps})^c} \nrho{1} dx + \int_{A^\e \backslash A^{-\e}} \nrho{0} + \nrho{1} dx \nc \\
  &\blue= \int_{A^{\veps}} \nrho{0} dx + \int_{(A^{-\e})^c} \nrho{1} dx \nc\\
  &=   \int_{A^{\veps}} \nrho{0} dx + w_1 -   \int_{A^{-\veps}} \nrho{1} dx,
\end{align*}
\blue where we have used the fact that $\rho_1$ is a probability distribution. \nc We are interested in the robust classification problem:
\begin{equation}
\inf_{A \in \M(\R^d)} R_\e(\mathds{1}_A).
\label{eqn:RobustClassification}
\end{equation}

\subsection{Duality principle and connection to an optimal transport problem}\label{sec:duality}

Problem \eqref{eqn:RobustClassification} admits a strong duality theorem. To illustrate, we recall previous results in \cite{bhagoji2019lower,pydi2019adversarial}. In those works, they consider $w_0 = w_1 = 1/2$, in which case the robust risk minimization problem becomes
\[
 \inf_{A \in \M(\R^d)} R_\e (\mathds{1}_A) = \frac{1}{2} \left( 1 - \sup_{A \in \M(\R^d)} \left\{    \int_{A^{-\veps}} \rho_1 dx  -    \int_{A^{\veps}} \rho_0 dx   \right\} \right).
\]
It is then shown that
\[ \sup_{A \in \M(\R^d)} \left\{    \int_{A^{-\veps}} \trho_1 dx  -    \int_{A^{\veps}} \trho_0 dx   \right\} =  \inf_{\pi \in \Gamma(\trho_1, \trho_0)} \int \mathds{1}_{\blue d(x_1,x_2)>2\veps \nc}d \pi(x_1,x_2) =: d_\e(\trho_1,\trho_0), \]
where here $\Gamma(\rho_1,\rho_0)$ denotes the set of probability measures on $\R^d \times \R^d$ with marginals $\rho_1$ and $\rho_0$ (i.e. the set of couplings or transportation plans between $\rho_1$ and $\rho_0$); the above result is closely connected to Strassen's theorem (see  Corollary 1.28 in \cite{villani2003topics}).  This result may be restated in the following way
\begin{equation}
  \inf_{A \in \M(\R^d)} R_\e(\mathds{1}_A) = \sup_{\pi \in \Gamma(\trho_1,\trho_0)} \frac{1}{2}\left( 1-  \int \mathds{1}_{\blue d(x_1,x_2)>2\veps \nc}d \pi(x_1,x_2) \right).
\label{eqn:DualityBalncedVarun}
\end{equation}

\blue
This duality principle provides a means of certifying the optimality of solutions to the functional $R_\e$, as is common in the context of convex optimization. Our later proofs establishing the global optimality of solutions that we construct using evolution equations will directly utilize this duality principle. Previous results in this vein focused only on the case with balanced classes: here we extend their results to the case of unbalanced classes. Indeed, the remainder of this section provides a direct generalization of the duality results given in \cite{bhagoji2019lower,pydi2019adversarial}.
\nc

In order to state a duality principle for more general $w_i$, it will be convenient to define the probability measure on $\R^d \times \{0,1\}$ given by
\[
\nu^S(E\times \{1\}) = \nu(E \times \{0\}),\qquad \nu^S(F \times \{0\}) = \nu(F \times \{1\}).
\]
In words, $\nu^S$ is simply the data distribution after swapping the $y$ labels. Using the measures $\nu$ and $\nu^S$, we now state a more general duality principle that applies for arbitrary $w_0, w_1$ and not just for $w_0=w_1=1/2$.

\begin{proposition}
	\label{prop:Duality}
	Let $c_{\veps}: (\R^d \times \{ 0,1\})^2 \rightarrow \R$ be the cost defined by
\[  c_{\veps}(z_1, z_2):= \mathds{1}_{\{\blue d(x_1,x_2)>2\veps \nc \} \cup \{ y_1\not =y_2\} }, \]
where we write $z_i = (x_i,y_i)$.
Then,
\[  2 \sup_{B \in \M( \R^d)}   \left\{  \int_{B^{-\veps}}  \nrho{1} dx   -  \int_{B^{\veps}} \nrho{0} dx     \right\}     -w_1+w_0=  \inf_{\pi \in \Gamma(\nu, \nu^S)} \int c_{\veps}(z_1, z_2) d\pi(z_1,z_2),  \]
which is also equal to
\[  2\sup_{A \in \M( \R^d)}   \left\{   \int_{A^{-\veps}}  \nrho{0} dx   -   \int_{A^{\veps}} \nrho{1} dx     \right\}     -w_0+w_1. \]
\end{proposition}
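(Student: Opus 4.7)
The plan is to apply Strassen/Kantorovich $0$-$1$ duality on the Polish space $\R^d \times \{0,1\}$, with marginals $\nu$ and $\nu^S$, and then exploit the product structure to decouple the resulting supremum by label, reducing matters to two instances of the balanced-case identity already used in \eqref{eqn:DualityBalncedVarun}.

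First, since $c_\veps = \mathds{1}_{\Gamma^c}$ for the closed set
\[ \Gamma := \bigl\{(z_1,z_2) \in (\R^d \times \{0,1\})^2 : d(x_1,x_2) \leq 2\veps,\ y_1 = y_2\bigr\}, \]
Strassen's theorem (Corollary 1.28 of \cite{villani2003topics}) yields
\[ \inf_{\pi \in \Gamma(\nu,\nu^S)} \int c_\veps\, d\pi \;=\; \sup_{A} \bigl[\nu(A) - \nu^S(\Gamma[A])\bigr], \]
where the supremum ranges over Borel $A \subseteq \R^d \times \{0,1\}$ and $\Gamma[A]$ is its $\Gamma$-image. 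Decomposing $A$ by label as $A = (A_0 \times \{0\}) \sqcup (A_1 \times \{1\})$ for Borel $A_0,A_1 \subseteq \R^d$, a direct computation gives $\Gamma[A] = (A_0^{2\veps} \times \{0\}) \cup (A_1^{2\veps} \times \{1\})$ (with dilations taken in $\R^d$), so
\[ \nu(A) - \nu^S(\Gamma[A]) \;=\; \bigl[w_0 \rho_0(A_0) - w_1 \rho_1(A_0^{2\veps})\bigr] + \bigl[w_1 \rho_1(A_1) - w_0 \rho_0(A_1^{2\veps})\bigr]. \]
The variables $A_0$ and $A_1$ are now uncoupled, and the supremum splits into $\sup_A G_1(A) + \sup_A G_2(A)$ where $G_1, G_2$ denote the two bracketed functionals.

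Next, I convert each of these suprema into the erosion/dilation form appearing in the statement. For $G_2(A) = w_1 \rho_1(A) - w_0 \rho_0(A^{2\veps})$, the comparisons $B := A^\veps$ (which gives $B^{-\veps} \supseteq A$ and $B^\veps = A^{2\veps}$) and $A := B^{-\veps}$ (which gives $A^{2\veps} \subseteq B^\veps$ via a standard midpoint argument on $\veps$-balls) establish
\[ \sup_A G_2(A) \;=\; \sup_B \bigl[w_1 \rho_1(B^{-\veps}) - w_0 \rho_0(B^\veps)\bigr] \;=:\; S, \]
and the analogous computation yields $\sup_A G_1(A) = \sup_B[w_0 \rho_0(B^{-\veps}) - w_1 \rho_1(B^\veps)]$. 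The substitution $B \mapsto B^c$, combined with the identities $(B^c)^{-\veps} = (B^\veps)^c$ and $(B^c)^{\veps} = (B^{-\veps})^c$ (valid up to $\rho_i$-null boundary sets), converts this last supremum into $(w_0 - w_1) + S$. Summing gives the first claimed identity $\inf_\pi \int c_\veps\, d\pi = 2S + w_0 - w_1$, and the second identity follows by rerunning the same argument with the roles of $(w_0,\rho_0)$ and $(w_1,\rho_1)$ swapped, or equivalently by the $B \mapsto B^c$ substitution.

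The main obstacle I anticipate is the careful measure-theoretic handling of the erosion/dilation/complementation identities, since in general $(B^c)^{\pm\veps}$ and $(B^{\mp\veps})^c$ can differ on the level set $\{x : d(x,\partial B) = \veps\}$; I will invoke the regularity and non-degeneracy assumptions on $\rho_0,\rho_1$ to see that this discrepancy is $\rho_i$-null and hence harmless in all of the relevant suprema. A secondary technical point is verifying the hypotheses of Strassen's theorem in this setting, which is immediate because $\R^d \times \{0,1\}$ is Polish under the obvious distance, $\Gamma$ is closed, and $\nu,\nu^S$ are Borel probability measures.
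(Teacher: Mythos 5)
Your proposal is correct, and it reaches the same set-based identity as the paper but by a different route. The paper does not invoke Strassen's theorem as a black box: it re-derives the set formulation from scratch via Kantorovich duality on $(\R^d\times\{0,1\})^2$, explicitly computing the $c_\veps$-transforms of the potentials, reducing to potentials with values in $[0,1]$, and then using the layer-cake representation to pass from potentials to indicator functions of sets; only afterwards does it perform the erosion/dilation and complementation manipulations, which coincide with your endgame. You instead observe that $c_\veps=\mathds{1}_{\Gamma^c}$ for the closed relation $\Gamma=\{d(x_1,x_2)\leq 2\veps,\ y_1=y_2\}$ (equivalently the $2\veps$-relation of a lifted metric on $\R^d\times\{0,1\}$), apply Strassen/0--1 duality directly, and decouple by label; this shortcuts the potential/$c$-transform/layer-cake machinery and makes the decoupling of the two label classes transparent, at the price of relying on the quoted theorem (plus routine remarks that $\Gamma[A]$ is Borel here, since dilations $A_i^{2\veps}$ are closed, and that the supremum may be taken over Borel sets by tightness). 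Two small refinements: first, the null-set issue in the identities $(B^c)^{\pm\veps}=(B^{\mp\veps})^c$ needs only the absolute continuity of the $\rho_i$ assumed in Section \ref{sec:setup} — the level sets $\{\tilde d_B=\pm\veps\}$ are Lebesgue-null for every Borel $B$ because $|\nabla \tilde d_B|=1$ a.e.\ off $\partial B$ — so you should not appeal to Assumptions \ref{assump:densities}, which are one-dimensional and not in force for this proposition; second, your midpoint argument giving $B^\veps=A^{2\veps}$ is both unnecessary and metric-restrictive (it needs a length-space structure, whereas the proposition is meant to hold for arbitrary metrics): in each substitution only one inclusion is used, namely $A\subseteq (A^\veps)^{-\veps}$, $(A^\veps)^\veps\subseteq A^{2\veps}$ and $(B^{-\veps})^{2\veps}\subseteq B^\veps$, and these follow from the triangle inequality and the $1$-Lipschitz property of the signed distance in any metric space. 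The paper handles the corresponding technical step (its passage from line two to line three) by citing the appendix of \cite{pydi2019adversarial}, so your self-contained treatment of it is a welcome addition.
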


\begin{proof}
We follow Theorem 1.27 in \cite{villani2003topics}. First, by the Kantorovich duality theorem (see Theorem 1.3 in \cite{villani2003topics}) we have

\begin{equation}
   \sup_{\phi(z_1) + \psi(z_2) \leq c_\veps(z_1, z_2)} \int \phi(z_1) d\nu(z_1) + \int \psi (z_2)d \nu^S(z_2)  = \inf_{\pi \in \Gamma(\nu, \nu^S)} \int c_\veps(z_1, z_2) d\pi(z_1, z_2).
   \label{eqn:Duality} 
\end{equation}
where the sup is over all $\phi\in L^1(\nu)$ and $\psi \in L^1(\mu)$ (known as Kantorovich potentials), and the inequality constraint must be interpreted for $\nu$ almost every  $z_1$ and for $\nu^S$ almost every $z_2$.

Let $\phi$ and $\psi$ be two arbitrary Kantorovich potentials. Notice that if $\phi(z) + \psi(\tilde z) \leq c_{\veps}(z,\tilde z)$ then necessarily  $\phi$  is (essentially) bounded above. By subtracting a constant from $\phi$  and adding this same constant to $\psi$, we can assume without the loss of generality that $\sup_{z} \phi(z) =1$. Now, for a given such $\phi$ the best corresponding $\psi$, i.e. its dual conjugate potential, is given by 
\[ \phi^{c_\veps}(\tilde z):= \inf_{z } \left\{ c_\veps(z, \tilde z) - \phi( z) \right\}. \]
Notice that $\phi^{c_\veps}$ can be written as:\blue
\begin{align*}  \phi^{c_\veps} (\tilde x,0) &= \min \begin{cases}  \inf \{ c_\veps(z, \tilde z) - \phi( z) :  z = (x,0), d(x,\bar x >2\e \} \\ \inf \{ c_\veps(z, \tilde z) - \phi( z): z = (x,0), d(x,\bar x) \leq 2\e\} \\ \inf \{  c_\veps(z, \tilde z) - \phi( z): z = (x,1) \} \end{cases} \\
&= \min \left\{ 1-   \sup_{ x: \blue d(\tilde x,x) >2\veps \nc} \phi( x,0)  ,    -\sup_{ x: \blue d(\tilde x, x)  \leq 2\veps \nc} \phi( x,0) ,   1- \sup_{ x} \phi( x,1)    \right\} , \end{align*}
Similarly we find that \nc
\[  \phi^{c_\veps} (\tilde x,1)=  \min \left\{ 1-   \sup_{ x: \blue d(\tilde x,x) >2\veps \nc} \phi( x,1)  ,    -\sup_{ x: \blue d(\tilde x,x) \leq 2\veps \nc} \phi( x,1) ,   1- \sup_{ x} \phi( x,0)    \right\} . \]
Since we have assumed that $\sup_{ z} \phi( z) = 1$ we can deduce from the above that $\phi^{c_\veps}( \tilde z) \in [-1,0]$. In particular, the supremum in \eqref{eqn:Duality} can be restricted to pairs $\phi, \psi$ satisfying the cost constraint and  $\psi \in [-1,0]$.  

Let us now consider an arbitrary $\psi$ \blue taking values in $[-1,0]$ \nc with its best associated $\phi$: 
\[  \psi^{c_\veps} (x,0):=  \min \left\{ 1-   \sup_{\tilde x: \blue d(\tilde x,x) >2\veps \nc} \psi(\tilde x,0)  ,    -\sup_{\tilde x: \blue d(\tilde x,x) \leq 2\veps \nc} \psi(\tilde x,0) ,   1- \sup_{\tilde x} \psi(\tilde x,1)    \right\} . \]
\[  \psi^{c_\veps} (x,1):=  \min \left\{ 1-   \sup_{\tilde x: \blue d(\tilde x,x) >2\veps \nc} \psi(\tilde x,1)  ,    -\sup_{\tilde x: \blue d(\tilde x,x) \leq 2\veps \nc} \psi(\tilde x,1) ,   1- \sup_{\tilde x} \psi(\tilde x,0)    \right\} . \]
Since \blue we are only considering $\psi$ which take non-positive values, \nc it follows that
\[ \psi^{c_\veps}(x,0)= - \sup_{\tilde x: \blue d(\tilde x,x) \leq 2\veps \nc} \psi(\tilde x , 0), \quad  \psi^{c_\veps}(x,1)= - \sup_{\tilde x: \blue d(\tilde x,x) \leq 2\veps \nc} \psi(\tilde x , 1),\]
which in particular implies that $\psi^{c_\veps}\in [0,1]$.  Finally, computing the conjugate of  $\phi :=\psi^{c_\veps} $ we get 
\[ \phi^{c_\veps}(\tilde x,0)= - \sup_{ x: \blue d(\tilde x,x) \leq 2\veps \nc} \phi( x , 0), \quad  \phi^{c_\veps}(\tilde x,1)= - \sup_{ x: \blue d(\tilde x,x) \leq 2\veps \nc} \phi( x , 1) \]
which is then seen to take values on $[-1,0]$. Since $\phi^{c_\veps}$ is the best $\psi$ for a given $\phi\in [0,1]$, it follows that the supremum in \eqref{eqn:Duality} is equal to
\[ \sup_{\phi \in [0,1]}   \int \phi(z) d\nu(z) + \int \phi^{c_\veps}(\tilde z) d\nu^S(z).     \]

From the fact that for arbitrary $\phi \in [0,1]$ we  have $\phi^{c_\veps}  \in [-1,0]$, we deduce, using the ``layer cake'' representation \blue (which we recall in Lemma \ref{lem:layer-cake} in the Appendix),\nc
\[ \int \phi(z) d\nu(z) + \int \phi^{c_\veps}(\tilde z) d\nu^S(\tilde z) = \int_{0}^1 \int \mathds{1}_{\phi(z) > s } d\nu(z) ds  -   \int_{0}^1 \int \mathds{1}_{-\phi^{c_\veps}(\tilde z) > s } d\nu^S(\tilde z) ds ,\]
\begin{equation}
\ =  \int_{0}^1 \left(  \int \mathds{1}_{\phi(z) > s } d\nu(z) ds  -    \int \mathds{1}_{-\phi^{c_\veps}(\tilde z) > s } d\nu^S(\tilde z)    \right) ds.
\label{eqn:Dualityaux}  
\end{equation}
We now rewrite the indicator function $\mathds{1}_{-\phi^{c_\veps}(\tilde z) \geq s}$ in terms of a $2\veps$-expansion of a set. Indeed, for $\tilde z=(\tilde x,0)$ we have:
\begin{align*}
\mathds{1}_{\{-\phi^{c_\veps}(\cdot) > s\}}(\tilde z)=1 & \Leftrightarrow -\phi^{c_\veps}(\tilde x,0) > s
\\& \Leftrightarrow \exists  x \text{ s.t. } \blue d(x,\tilde  x)\leq 2\veps  \nc \text{ and }   \phi(x, 0)> s
\\& \Leftrightarrow \tilde x\in \{  x \: : \:  \phi(x, 0) >s \}^{2\veps}.
\end{align*}
Thus, $ \mathds{1}_{\{ -\phi^{c_\veps}(\cdot) > s \}}(\tilde x,0) = \mathds{1}_{ \{  \phi(\cdot , 0) >s \}^{2\veps}}(\tilde x)$. In the exact same way we see that $\mathds{1}_{\{-\phi^{c_\veps}(\cdot)> s \}}(\tilde x,1) = \mathds{1}_{ \{  \phi(\cdot, 1) >s \}^{2\veps}}(\tilde x)$. Since we are integrating over $s \in [0,1]$, we may infer that there exists $s \in [0,1]$ such that
\[ \int_{0}^1 \left(  \int \mathds{1}_{\phi(z) > s } d\nu(z) ds  -    \int \mathds{1}_{-\phi^{c_\veps}(\tilde z) > s } d\nu^S(\tilde z)    \right) ds  \leq  \int \mathds{1}_{\phi(z) > s } d\nu(z) ds  -    \int \mathds{1}_{-\phi^{c_\veps}(\tilde z) > s } d\nu^S(\tilde z)  \] 
\begin{align*}
\begin{split}
 &= \int \mathds{1}_{\{\phi(x,0) > s\}} \nrho{0}(x) dx + \int \mathds{1}_{\{\phi(x,1) > s\}} \nrho{1}(x) dx 
 \\& - \int \mathds{1}_{\{\phi(x,0) > s\}^{2\veps}} \nrho{1}(x) dx - \int \mathds{1}_{\{\phi(x,1) > s\}^{2\veps}} \nrho{0}(x) dx,
 \end{split}
 \end{align*}
 where we have used the definitions of $\nu$ and $\nu^S$.
%
The above computations,  along with \eqref{eqn:Dualityaux}, \nc allow us to conclude that:
\begin{align*}
 &\inf_{\pi \in \Gamma(\nu, \nu^S)} \int c_{\veps}(z_1, z_2) d\pi(z_1,z_2) \\
 &= \sup_{A\in \M(\R^d)}   \left\{  \int_{A}  \nrho{0} dx   -   \int_{A^{2\veps}} \nrho{1} dx      \right\}  + \sup_{B\in \M(\R^d)}  \left\{   \int_{B}\nrho{1} dx -  \int_{B^{2\veps}} \nrho{0} dx  \right\} \\
 &=   \sup_{A\in \M(\R^d)}   \left\{   \int_{A^{-\veps}}  \nrho{0} dx   -  \int_{A^{\veps}} \nrho{1} dx     \right\}  + \sup_{B \in \M(\R^d)}  \left\{   \int_{B^{-\veps}}\nrho{1} dx -  \int_{B^{\veps}} \nrho{0} dx  \right\}\\
 &=   \sup_{A \in \M(\R^d)}   \left\{   \int_{\blue (A^{c})^{-\veps} \nc}  \nrho{1} dx   -  \int_{\blue (A^{c})^{\veps} \nc} \nrho{0} dx      \right\}  
 \\& \quad  + \sup_{B \in \M(\R^d)}  \left\{  \int_{B^{ -\veps}} \nrho{1} dx -   \int_{B^{\veps}}\nrho{0} dx   \right\} -w_1 +w_0\\
&= 2  \sup_{B \in \M(\R^d)}   \left\{  \int_{B^{-\veps}}  \nrho{1} dx   - \int_{B^{\veps}} \nrho{0} dx     \right\}  -w_1 +w_0.\end{align*}
\blue In the previous computation the step from line two to line three is the only one which does not follow directly from definitions: its justification relies upon technical measure-theoretical arguments which can be found in the appendix of \cite{pydi2019adversarial}, and which we omit here for the sake of brevity. \nc
Notice that we  also obtain:
\[ = 2  \sup_{A \in \M(\R^d)}   \left\{   \int_{A^{-\veps}}  \nrho{0} dx   -   \int_{A^{\veps}} \nrho{1} dx     \right\}  -w_0 +w_1.\]

This shows our desired result.

\end{proof}

\blue
We now translate the previous proposition, which mirrors the terminology used in describing duality in optimal transportation and linear programming, into a form which directly links the adversarial classification problem with the transportation problem from the previous proposition.
\nc

\begin{corollary}
	\label{cor:Risk}
$\mathds{1}_A$ for some $A \in \M(\R^d)$ minimizes $R_{\veps}$ if and only if $A$ maximizes
\[\sup_{A \in \M(\R^d)} \left\{ w_1 \int_{A^{-\veps}} \rho_1 dx - w_0 \int_{A^{\veps}} \rho_0 dx \right\}.\]
	
Moreover, 	
	\[\inf_{A \in \M(\R^d)} {R}_\veps(\mathds{1}_A) =  \frac{1}{2} - \frac{1}{2} \inf_{\pi \in \Gamma(\nu, \nu^S)} c_\veps(z_1, z_2) d\pi(z_1,z_2). \]
\end{corollary}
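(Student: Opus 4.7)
The plan is to deduce both assertions directly from the explicit formula for $R_\veps(\mathds{1}_A)$ derived earlier in the paper and from the duality identity of Proposition \ref{prop:Duality}; the work is essentially bookkeeping, and no substantive new ideas are needed.

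First I would recall the simplification
\[ R_\veps(\mathds{1}_A) = \int_{A^{\veps}} \nrho{0}\, dx + w_1 - \int_{A^{-\veps}} \nrho{1}\, dx = w_0 \int_{A^\veps} \rho_0\, dx + w_1 - w_1 \int_{A^{-\veps}} \rho_1\, dx, \]
using the convention $\nrho{i} = w_i \rho_i$. Since the additive constant $w_1$ does not depend on $A$, the set $A$ minimizes $R_\veps(\mathds{1}_A)$ over $\M(\R^d)$ if and only if it maximizes
\[ w_1 \int_{A^{-\veps}} \rho_1\, dx - w_0 \int_{A^\veps} \rho_0\, dx, \]
which is exactly the first claim of the corollary.

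For the identity characterizing the optimal value, I would invoke Proposition \ref{prop:Duality} in the form
\[ 2 \sup_{B \in \M(\R^d)} \left\{ \int_{B^{-\veps}} w_1 \rho_1\, dx - \int_{B^{\veps}} w_0 \rho_0\, dx \right\} - w_1 + w_0 = \inf_{\pi \in \Gamma(\nu,\nu^S)} \int c_\veps(z_1,z_2)\, d\pi(z_1,z_2). \]
Solving for the supremum and substituting into the expression $\inf_A R_\veps(\mathds{1}_A) = w_1 - \sup_A \{w_1 \int_{A^{-\veps}} \rho_1 - w_0 \int_{A^\veps} \rho_0\}$ gives
\[ \inf_{A \in \M(\R^d)} R_\veps(\mathds{1}_A) = w_1 - \tfrac{1}{2}\left( \inf_\pi \int c_\veps\, d\pi + w_1 - w_0 \right) = \tfrac{w_0 + w_1}{2} - \tfrac{1}{2} \inf_\pi \int c_\veps\, d\pi. \]
Since $w_0 + w_1 = 1$, this is the desired identity.

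The main potential obstacle is purely notational: being careful to keep track of the weights $w_0, w_1$ when going between the notation $\nrho{i}$ used in Proposition \ref{prop:Duality} and the notation $\rho_i$ used in the statement of the corollary, and ensuring the signs and the roles of the $-\veps$ and $+\veps$ dilations match up. Beyond that, the argument is a one-line rearrangement of the duality identity already established, so no analytic difficulty arises here.
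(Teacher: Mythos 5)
Your proposal is correct and follows essentially the same route as the paper: rewrite $R_\veps(\mathds{1}_A)$ so the equivalence with maximizing $w_1\int_{A^{-\veps}}\rho_1\,dx - w_0\int_{A^{\veps}}\rho_0\,dx$ is immediate, then substitute the duality identity of Proposition \ref{prop:Duality} and use $w_0+w_1=1$ to obtain the stated value of the infimum. No gaps; the bookkeeping with the weights and signs is handled correctly.
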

\begin{proof}
Recall that
\[R_\veps(\mathds{1}_A) =  \int_{A^{\veps}} \nrho{0} dx + w_1 -   \int_{A^{-\veps}} \nrho{1} dx \]
so 
\[  \inf_{A \in \M(\R^d)} R_\veps(\mathds{1}_A) = w_1  - \sup_A \left\{ \int_{A^{-\veps}} \nrho{1} dx -  \int_{A^{\veps}} \nrho{0} dx \right\} \]
\[ = w_1 - \frac{1}{2}(w_1- w_0) - \frac{1}{2} \inf_{\pi \in \Gamma(\nu, \nu^S)} c_\veps(z_1, z_2) d\pi(z_1,z_2) \]
\end{proof}

\begin{remark}
Let us consider the balanced case $w_0=w_1=1/2$. Since
\[ \inf_{A\in \M(\R^d)} R_\veps (A) = \frac{1}{2} \left(1 -  \inf_{\pi \in \Gamma(\nu, \nu^S)} c_\veps(z_1, z_2) d\pi(z_1,z_2) \right), \]
it follows that
\[   \inf_{\pi \in \Gamma(\nu, \nu^S)} \int c_{\veps}(z_1, z_2) d\pi(z_1,z_2)=    \inf_{\gamma \in \Gamma(\rho_0, \rho_1)} \int \mathds{1}_{\blue d(x_1,x_2)>2\veps \nc} d\gamma(x_1,x_2). \]

\end{remark}

\nc

\section{Necessary conditions and corresponding evolution equation in one dimension}\label{sec-1D}

We now describe, in detail, the necessary conditions for minimizing $R_\e$, and the evolution equation that they induce. For clarity, we begin by describing this evolution equation in the simple case where $x \in \R$, \blue under the standard metric. \nc In this case we will be able to prove that the resulting evolution equation completely characterizes the global minimizer of $R_\e$ for small $\e$ under mild assumptions; the formal statement and proof of this result is given in Section \ref{sec:Global}. Subsequently, in Section \ref{sec:high-D} we will turn our attention to the case where $x \in \R^d$.

To begin, let us assume that we may represent the boundary of the optimal set $A_\e^*$ in terms of two \blue parametrized collections of points $a_i(\e)$ and $b_i(\e)$, so that $A_\e^* = \cup_{i=1}^K [a_i(\e),b_i(\e)]$. \nc Here we allow $a_1 = -\infty$ and $b_K = +\infty$ if necessary, and we notice that, as $\nrho{0},\nrho{1}$ are both absolutely continuous (see Assumptions \ref{assump:densities}), it makes no difference whether the sub-intervals are open or closed. We note that this assumption will hold for $\e=0$ as long as the set where $\nrho{0} = \nrho{1}$ is a discrete set, a mild assumption. Finally,  in the remainder we may suppress the dependence of $a_i,b_i$ on $\e$, in order to decrease the notational burden. \blue Furthermore, and following our notational convention for the $\rho_i$, any time $a_i,b_i$ are followed by parentheses we always mean that the parentheses denote function evaluation: cases where multiplication is implied will be denoted by $a_i \cdot z$. \nc We use the convention $a_1 <b_1 <a_2< b_2<\dots< a_K <b_K $.


As $A_\e^*$ is a minimizer of $R_\e$, we may freely perturb the boundary points (i.e. $a_i, b_i$) without increasing the energy. In particular, for $|\delta|$ small enough, if we consider the set $A(\delta) = (a_1,b_1+\delta) \cup \left( \cup_{i=2}^K (a_i,b_i) \right)$, then since $A_\e^*$ is a minimizer we have that $R_\e(A(\delta)) - R_\e(A_\e^*) \geq 0$.  
Taking $\delta \to 0$ and using the fundamental theorem of Calculus then allows us to write
\begin{align*}
0 &= \lim_{\delta \to 0} \frac{R_\e(A(\delta))-R_\e(A_\e^*)}{\delta}\\
&= \nrho{0}(b_1 + \e) - \nrho{1}(b_1-\e). 
\end{align*}
An analogous argument for the $a_i$ and for the rest of the $b_i$ then allows us to write the necessary conditions:
\begin{equation}\label{eqn:1D-Necessary}
 \nrho{1}(b_i-\veps) =  \nrho{0}(b_i+\veps), \quad \nrho{1}(a_i+\veps) = \nrho{0}(a_i-\veps),
\end{equation}
which hold for all $a_i$ and $b_i$ that are not $-\infty$ or $+\infty$. In the remainder, if $a_1(0)=-\infty$ we set $a_1(\veps)=-\infty$ for $\veps>0$ and likewise if $b_{K}(0)=+\infty $, then $b_K(\veps)=+\infty$. This relates to the fact that our differential equation approach does not track potential ``topological changes'' in the decision boundaries, and is mostly focused on the case where $\e$ is small. 
%
%
%
%
%
We remark that when $\e = 0$, the above necessary condition gives precisely $\nrho{0} = \nrho{1}$, which characterizes the boundary points of the Bayes classifier. In a sense, we may view the necessary condition above as a \emph{non-local algebraic} condition: namely, that the condition that $\nrho{0}(b_i) = \nrho{1}(b_i)$ (for $\e =0$) has been replaced by the non-local algebraic condition $\nrho{0}(b_i+\e) = \nrho{1}(b_i - \e)$ (for $\e>0$).
%
%

Using the necessary conditions \eqref{eqn:1D-Necessary}, we can exactly describe the local evolution of the boundary of the set $A_\veps^*$ for small changes in $\e$. In particular, let us suppose that each boundary point varies smoothly in $\e$, namely that we express $a_i(\e)$ and $b_i(\e)$ as smooth functions in $\e$. Differentiating the necessary condition and using the chain rule, we find that
\[
   \nrho{0}'(b_i+\e) \left(\frac{db_i}{d\e} + 1\right) -   \nrho{1}'(b_i-\e) \left(\frac{db_i}{d\e} - 1\right) = 0.
\]
%
We may then solve this equation for $\frac{db_i}{d\e}$,
\begin{equation}\label{eqn:evolutiod-1d}
  \frac{db_i}{d\e} = - \frac{\nrho{0}'(b_i+\e) + \nrho{1}'(b_i-\e)}{\nrho{0}'(b_i+\e) - \nrho{1}'(b_i-\e)}.
\end{equation}
The necessary condition for the $a_i$ is analogous:
\begin{equation}\label{eqn:evolutiod-1db}
  \frac{da_i}{d\e} = - \frac{\nrho{1}'(a_i+\e) +\nrho{0}'(a_i-\e)}{\nrho{1}'(a_i+\e) - \nrho{0}'(a_i-\e)}.
\end{equation}
We continue to use the convention that \blue $a_1(\e) =-\infty$ when $a_1(0)=-\infty$ and similarly  $b_K(\e) =+\infty$ if $b_K(0)=+\infty$\nc.

The previous equations allow us to precisely describe (locally) the evolution of the decision boundaries using ordinary differential equations. In particular, beginning at $\e=0$ with the decision boundary of the Bayes classifier, we may directly solve for the optimizer of $R_\e$ by solving a system of at most $2K$ decoupled differential equations. High fidelity approximations of these equations may be obtained using standard software packages.

We remark that the differential equations at $\e=0$ are much simpler, for example: 
\begin{equation}
  \frac{db_i}{d\e}[\e=0] = -\frac{\rho'(b_i)}{\nrho{0}'(b_i) - \nrho{1}'(b_i)}.
    \label{eqn:DerivativesatZero}
\end{equation}
This indicates that the $b_i$ initially moves downhill in $\rho$, with speed dictated by the inverse of the derivative of the difference between the probability of the different classes. To determine the sign of the denominator, we notice that since $w_1\rho_1$ is assumed to be larger than $w_0\rho_0$ inside $(a_i(0),b_i(0))$, it is natural to assume that $w_1\rho_1'(b_i(0)) < w_0 \rho_0'(b_i(0))$. This assumption is made explicit in Assumption \ref{assump:densities}). A similar conclusion holds for the left endpoints $a_i$. Although the above non-local formulas are not too complicated here, the analogous approximation near $\e=0$ will be more important in understanding the geometric flow induced in dimension higher than one as we will see in Section \ref{sec:high-D}.

\nc

\subsection{Simple example} \label{sec:1d-example}

Suppose that $\Prob(X\in dx| Y=1) = \phi(x)dx$, where $\phi$ is the standard normal density $\phi(x) = \frac{1}{\sqrt{2\pi}}\exp(-x^2/2)$, and let $\Prob(X \in dx | Y=0) = \frac{\phi((x-2)/2)dx}{2}$.  Assume also that $\Prob(Y=1)=\Prob(Y=0)=1/2$. Since the variances of the two Gaussians $\Prob(X\in dx|Y=0)$ and $\Prob(X=x|Y=1)$ are different, their densities must intersect at exactly two points, in this case at 
\[a_1(0)= -\frac{2}{3}\left( 1+ \sqrt{2(2 +3 \log(2))} \right) \approx -2.57 , \quad  b_1(0) =  \frac{2}{3}\left( \sqrt{2(2 +3 \log(2))}-1  \right) \approx 1.23. \]
The corresponding Bayes classifier for this problem is the indicator function of the set $(a_1(0),b_1(0))$.

Since the solutions $a_1$ and $ b_1$ of the ODEs  \eqref{eqn:evolutiod-1d} and \eqref{eqn:evolutiod-1db} satisfy the necessary conditions \eqref{eqn:1D-Necessary}, it follows from Theorem 2 in \cite{pydi2019adversarial} (which characterizes optimality in the Gaussian setting) that the set $A^*_\veps:= (a_1(\e),b_1(\e))$ is a global solution to the adversarial robust problem \eqref{eqn:RobustClassification} for all $\veps$ small enough.

In order to provide concrete numerical values for the decision boundary as a function of $\veps$ we use a standard ODE solver in Python. The decision boundary, as well as the associated densities, are given in Figure \ref{fig:two-normal}. \blue We notice that $a_\e$ moves to the left, which is consistent with the fact that $\rho$ has positive slope at $a(0)$. Similarly, the $b_\e$ moves to the right, which is consistent with the fact that $\rho$ has negative slope at $b(0)$. These decision boundaries required \textit{no optimization}, and are provably global minimizers for small $\e$. \nc

\begin{figure}[h!]
  \centering
  \includegraphics[width=.4\textwidth]{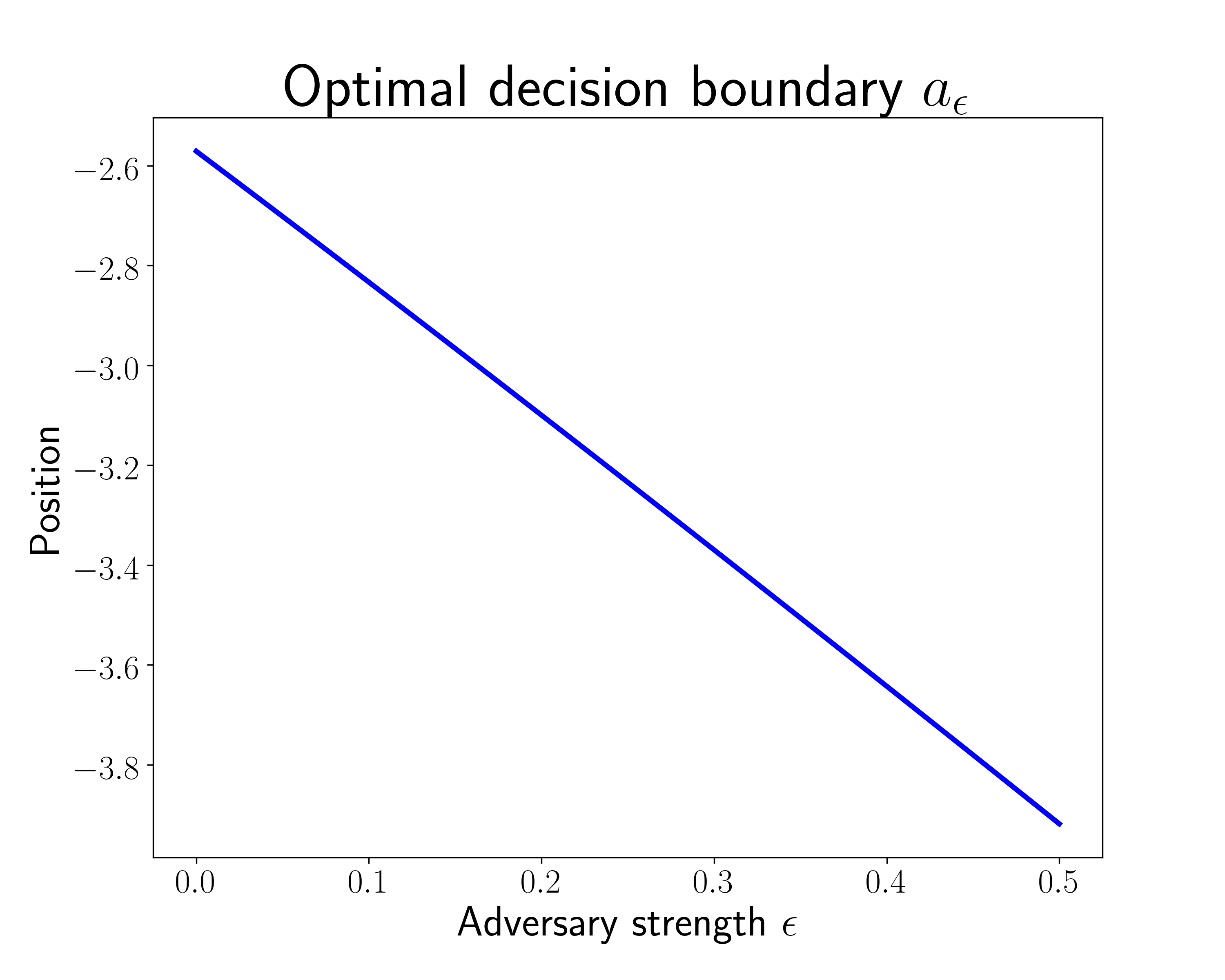}
  \includegraphics[width=.4\textwidth]{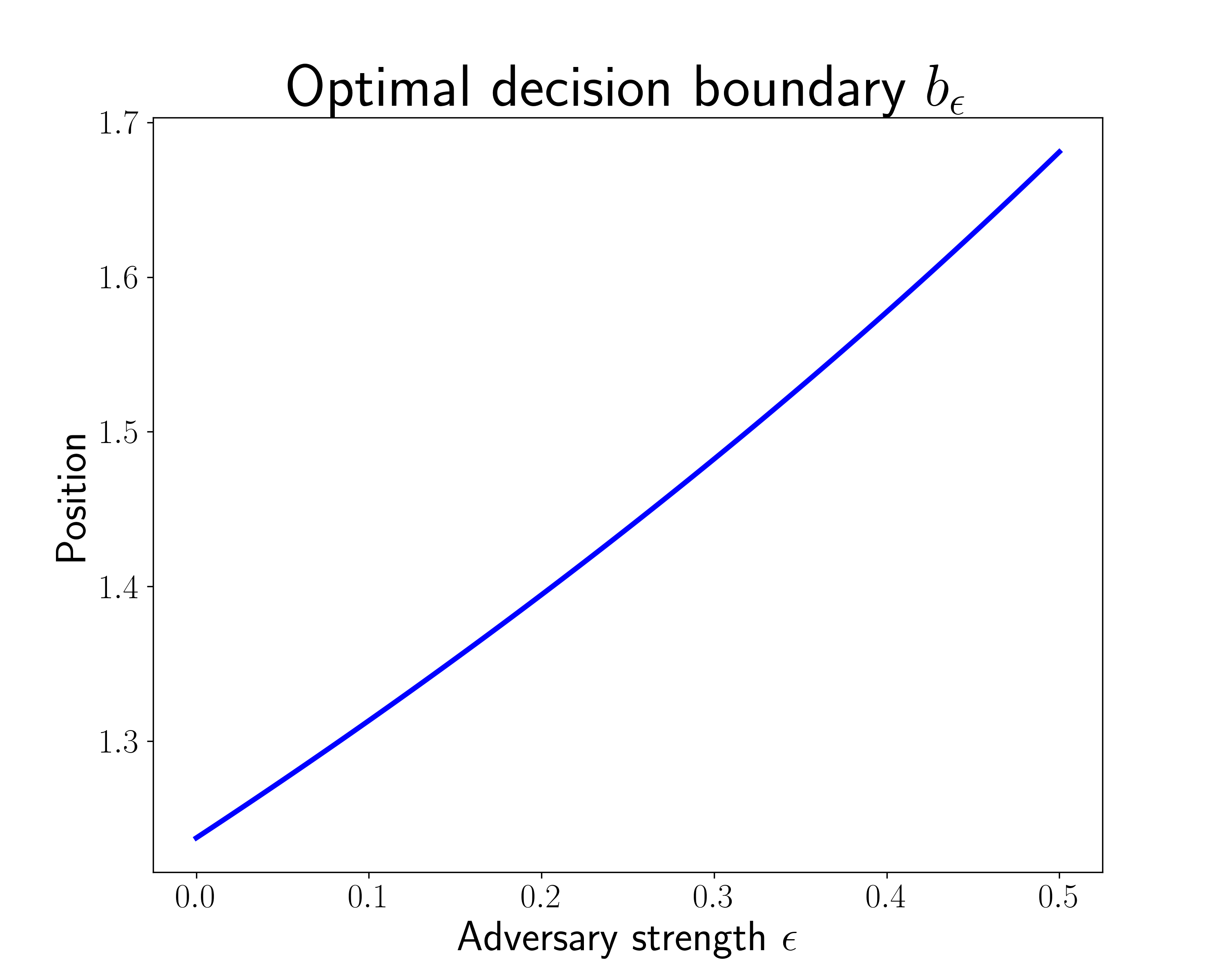}
  \includegraphics[width=.4\textwidth]{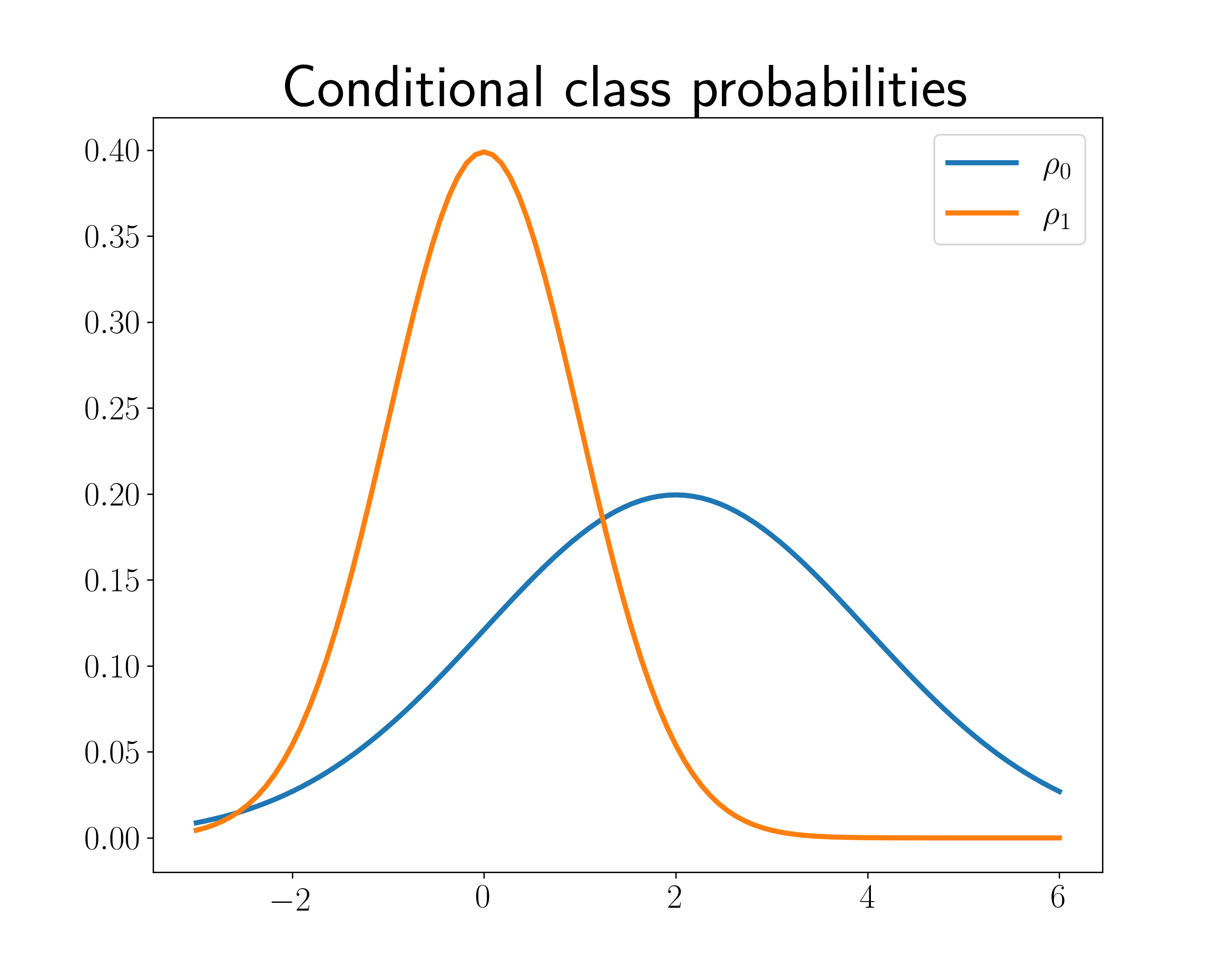}
  \caption{Plot of decision boundaries as $\e$ varies for example in Section \ref{sec:1d-example} , as well as the underlying probabilities.}
    \label{fig:two-normal}
\end{figure}

\section{Global minimizers in one dimension}\label{sec:Global}

The evolution equations of the previous section are based upon necessary conditions for the adversarial classification problem. Since they are based upon necessary conditions, it is not immediately obvious whether or not these solutions are global minimizers of the adversarial variational problem \eqref{eqn:RobustClassification}. The goal of this section is to prove that solutions of the evolution equation are indeed global minimizers for all small enough $\e$, or in other words that the evolution equation locally characterizes the minimizers of the adversarial problem. In order to do so, we will require the following mild assumptions on the densities $\nrho{0},\nrho{1}$:

\begin{assumption}
	\label{assump:densities}
	We make the following assumptions on the densities $\rho_{0}$ and $\rho_{1}$. 
	\begin{enumerate}
		\item Regularity condition: $\rho_{0}, \rho_{1} \in C^1(\R)$.
		\item Non-degeneracy condition I: there are only finitely many $t \in \R$ for which $\nrho{0}(t)= \nrho{1}(t)>0$.
		\item Non-degeneracy condition II: for every $t \in \R$ for which $\nrho{0}(t ) = \nrho{1}(t)>0$ we have $w_0\rho'_0(t ) \not = w_1\rho'_1(t)$.
	\end{enumerate}
\end{assumption}

\blue
We pause to briefly discuss these assumptions. First, we notice that the points in the boundary of the Bayes' classifier will necessarily satisfy $w_0\rho_0(t) = w_1\rho_1(t)$. Condition ii) then can be restated as requiring that the Bayes classifier be composed of finitely many intervals on the support of $\rho$. Condition iii), which only makes sense if we assume enough regularity, i.e. Condition i), rules out degeneracies, implying that the Bayes' classifier is essentially unique. Condition iii) also implies that the Bayes' classifier is stable under $C^1$ perturbations of the $\rho_i$: in a sense, this is what is leveraged in the proof of our main result, namely Theorem \ref{thm:GlobalOptimality}. These conditions should be seen as relatively mild, and indeed Condition iii) ought to be generic within the class of $C^1$ functions. For example, if we require that the $\rho_i$ be given by finite mixtures of Gaussians, then these assumptions will hold for almost every choice of parameters (i.e. means and variances). Before stating the main result, \nc we begin with a few remarks which will be important in our proof strategy.

%

\red 
\nc


\begin{remark}[Global optimality via duality]
Suppose that $A$ is a measurable subset of $\R^d$ that satisfies 
\begin{equation}
\frac{1}{2}\int c_\veps(z_1, z_2) d\pi_\veps(z_1, z_2) + \frac{1}{2}(w_1-w_0) \leq  \int_{A^{-\veps}} w_1 \rho_1 dx - \int_{A^{\veps}} w_0 \rho_0 dx,
\label{eqn:Aux}  
\end{equation}
for some $\pi_\veps \in \Gamma(\nu, \nu^S)$. Then, it follows from Proposition \ref{prop:Duality} that $A$ and $\pi_\veps $ are solutions to the optimization problems in that same proposition, and by Corollary \ref{cor:Risk}, $A$ is also a minimizer of \eqref{eqn:RobustClassification}.

\label{rem:Duality} 
\end{remark}

\begin{remark}[Knott-Smith optimality criterion]
According to Remark \eqref{rem:Duality}, to show that a given measurable set $A$ is an optimizer for \eqref{eqn:RobustClassification}, we would need to construct a coupling $\pi_\veps$ for which \eqref{eqn:Aux} holds. Now, let $A$ be a measurable subset of $\R^d$, and suppose that $\pi_\veps \in \Gamma(\nu, \nu^S)$ is concentrated on the set:
\begin{equation}
\{ (z_1, z_2) \in  (\R^d \times\{ 0,1\} )^2 \: : \:  \mathds{1}_{A^{-\veps} \times \{ 0\}}(z_1)  - \mathds{1}_{A^{\veps} \times \{0\}}(z_2) +  \mathds{1}_{(A^{c})^{-\veps} \times \{1\}}(z_1) - \mathds{1}_{(A^{c})^{\veps} \times \{1\}}(z_2)
= c_{\veps}(z_1, z_2)    \}.
\end{equation}
Then, it is straightforward to check that $A$ and $\pi_\veps$ satisfy \eqref{eqn:Aux} (with equality). The above condition for $\pi_\veps$ suggests then how mass must be exchanged between the measures $\nu$ and $\nu^S$ in order to get an optimal coupling. This insight is used to build the coupling from Theorem \ref{thm:GlobalOptimality} below.
\end{remark}

We are now ready to state the main result of this section, which states that the evolution equations \eqref{eqn:evolutiod-1d} and \eqref{eqn:evolutiod-1db} locally characterize minimizers of the adversarial classification problem.

\begin{theorem}
	\label{thm:GlobalOptimality}
	Under Assumptions \ref{assump:densities}  on $\rho_0, \rho_1, w_0, w_1$, there exists $\veps_0>0$ such that for every $\veps \in [0, \veps_0]$ there exists a coupling $\pi_\veps \in \Gamma(\nu, \nu^S)$ satisfying:
	\[   w_1 \int_{(A^*)^{-\veps}}  \rho_1 dx   -  w_0 \int_{(A^*)^{\veps}} \rho_0dx        = \frac{1}{2}  \int c_{\veps}(z_1, z_2) d\pi_\veps(z_1,z_2) +\frac{1}{2}(w_1-w_0),  \] 
	where $A^*=A^*_\veps:= \bigcup_{i=1}^K (a_i(\veps), b_i(\veps))$ and the functions $a_i, b_i$ solve the Equations \eqref{eqn:evolutiod-1d} and \eqref{eqn:evolutiod-1db}, \blue which we recall to be
	\begin{equation}
	    \frac{db_i}{d\e} = - \frac{\nrho{0}'(b_i+\e) + \nrho{1}'(b_i-\e)}{\nrho{0}'(b_i+\e) - \nrho{1}'(b_i-\e)},\tag{\ref{eqn:evolutiod-1d} revisited}
	\end{equation}
	\begin{equation}
	     \frac{da_i}{d\e} = - \frac{\nrho{1}'(a_i+\e) +\nrho{0}'(a_i-\e)}{\nrho{1}'(a_i+\e) - \nrho{0}'(a_i-\e)},\tag{\ref{eqn:evolutiod-1db} revisited}
	\end{equation} \nc
	with initial conditions $a_i(0), b_i(0)$; here, the points $a_i(0),b_i(0)$ form the decision boundary for the Bayes classifier.  In particular, according to Remark \ref{rem:Duality} the set $A^*_\veps$ induces an optimal robust classifier for $\veps$, i.e. it is a solution to the Problem \eqref{eqn:RobustClassification}.
\end{theorem}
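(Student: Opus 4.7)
The plan is to prove the theorem in three main steps: establishing well-posedness of the ODE flow, verifying preservation of the algebraic necessary conditions along the flow, and explicitly constructing the certifying coupling $\pi_\veps$.

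For the first step, Assumption \ref{assump:densities}-(iii) makes the denominators of the ODEs \eqref{eqn:evolutiod-1d} and \eqref{eqn:evolutiod-1db} nonzero at $\veps=0$, while (i) gives continuity of the right-hand sides and (ii) ensures only finitely many initial points. Picard--Lindelof then yields unique $C^1$ solutions on a common interval $[0,\veps_0]$, and further shrinking of $\veps_0$ keeps the intervals $(a_i(\veps),b_i(\veps))$ pairwise disjoint with length greater than $2\veps$, so $A^*_\veps$ preserves the topology of $A^*_0$ and one has the clean formulas $(A^*_\veps)^{-\veps}=\cup_i(a_i(\veps)+\veps,b_i(\veps)-\veps)$ and $(A^*_\veps)^\veps=\cup_i(a_i(\veps)-\veps,b_i(\veps)+\veps)$. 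For the second step, I define $F_i(\veps):=\nrho{1}(b_i(\veps)-\veps)-\nrho{0}(b_i(\veps)+\veps)$, note $F_i(0)=0$ by definition of the Bayes boundary, and verify by the chain rule combined with \eqref{eqn:evolutiod-1d} that $F_i'(\veps)\equiv 0$; hence the algebraic necessary conditions \eqref{eqn:1D-Necessary} hold throughout $[0,\veps_0]$, and symmetrically for the $a_i$.

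The third step is the heart of the proof: I construct a coupling $\pi_\veps\in\Gamma(\nu,\nu^S)$ whose structure is guided by Remark \ref{rem:Duality}. Near each boundary point $b_i(\veps)$, the pointwise identity $\nrho{1}(b_i-\veps)=\nrho{0}(b_i+\veps)$ from Step 2 enables a local same-label transport coupling the label-$1$ component of $\nu$ on $(b_i-\veps,b_i)$ with the label-$1$ component of $\nu^S$ at nearby points of $(A^*_\veps)^{-\veps}$, with displacement at most $2\veps$ and cost zero; a symmetric construction handles label-$0$ mass just outside $b_i$. In bulk regions I use identity-on-$x$ couplings with density $\min(\nrho{0},\nrho{1})$ per label. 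The global label imbalance $|w_0-w_1|$ is then absorbed via label-swap transitions, localized strictly outside $(A^*_\veps)^\veps$ (when $w_1>w_0$, sending label $1$ to label $0$) or strictly inside $(A^*_\veps)^{-\veps}$ (when $w_0>w_1$, sending label $0$ to label $1$); these are the only label-changing transitions permitted by the Knott--Smith criterion. A direct computation of $\int c_\veps\,d\pi_\veps$ from this decomposition, using the necessary conditions to telescope the contributions at each boundary, should yield $\int c_\veps\,d\pi_\veps = 2\int_{(A^*_\veps)^{-\veps}}\nrho{1}\,dx - 2\int_{(A^*_\veps)^\veps}\nrho{0}\,dx - (w_1-w_0)$, which rearranges to the theorem's identity and, via Remark \ref{rem:Duality}, certifies $A^*_\veps$ as globally optimal.

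I expect the main obstacle in step three to be reconciling pointwise and integrated mass balance: the algebraic necessary conditions give \emph{pointwise} equality of $\nrho{1}$ and $\nrho{0}$ at boundary endpoints (with shifted arguments), but the marginal constraints on $\pi_\veps$ demand \emph{integrated} balance between the source and target measures. Routing the residual mass through label-swap transitions of exactly the right total, while respecting the spatial support constraints imposed by the Knott--Smith set, and then verifying that the glued object is a genuine probability coupling with marginals $\nu$ and $\nu^S$, is the delicate core of the argument and requires careful local-to-global analysis.
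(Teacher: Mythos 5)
Your plan follows the paper's strategy closely: solve the ODEs from the Bayes boundary, check that the flow preserves the necessary conditions \eqref{eqn:1D-Necessary}, and certify global optimality by assembling a coupling out of zero-cost monotone transports near each boundary point, identity couplings in the bulk, and a cost-one residual, concluding via the duality principle of Remark \ref{rem:Duality}. (A minor point: with $\rho_i\in C^1$ the right-hand sides of \eqref{eqn:evolutiod-1d}--\eqref{eqn:evolutiod-1db} are merely continuous in the state variable, so Picard--Lindel\"of does not literally apply; Peano existence suffices, and your conservation argument $F_i'\equiv 0$ works along any solution as long as the denominators stay away from zero, which Assumption \ref{assump:densities} iii) and continuity guarantee for small $\veps$.)

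The genuine gap is in the central claim of your Step 3, that the pointwise identity $\nrho{1}(b_i-\veps)=\nrho{0}(b_i+\veps)$ ``enables'' a local transport near $b_i$ with displacement at most $2\veps$. Equality at the single pair of points $b_i\pm\veps$ does not by itself control the displacement of a mass-balanced monotone matching on a neighborhood: what is needed is a comparison of the two densities at arguments that are $2\veps$ apart, namely $\nrho{0}(b_i+\veps-s)\le \nrho{1}(b_i-\veps-s)$ for small $s>0$ (together with the mirror inequality on the other side of $b_i$ and the analogues at $a_i$). This does not follow from the Bayes ordering $\nrho{1}\ge\nrho{0}$ on $(a_i,b_i)$ either, since the densities are evaluated at different points; it is obtained by combining the preserved necessary condition (equality at $s=0$) with the locally uniform derivative ordering $w_0\rho_0'>w_1\rho_1'$ near $b_i(0)$, which comes from Assumption \ref{assump:densities} iii) and $C^1$ regularity via the fundamental theorem of calculus --- this is precisely the paper's inequality \eqref{eqn:AuxDensities1}. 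The same derivative ordering is also what guarantees that the local transport regions can be closed off at balancing endpoints $r_i^{\pm},\tilde r_i^{\pm}$ within a fixed $\delta$-neighborhood, and it is those balancing identities that make your ``telescoping'' computation of the residual mass, and hence the claimed value of $\int c_\veps\,d\pi_\veps$, come out to $2\int_{(A^*)^{-\veps}}\nrho{1}\,dx-2\int_{(A^*)^{\veps}}\nrho{0}\,dx-(w_1-w_0)$. Since your outline never invokes Assumption iii) in the coupling construction, the $2\veps$-displacement bound and the existence of the local balancing endpoints are unsupported as written; once these are supplied, the rest of your argument goes through essentially as in the paper. One further caution: the cost-one residual is not only the global imbalance $|w_0-w_1|$ but the full bulk excess mass, which is positive even when $w_0=w_1$; your final cost formula is consistent with this, but the description of the residual should reflect it.
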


\blue
The proof of this result is somewhat technical, because one needs to explicitly construct the transportation plans in question. We construct these plans in several steps, which can be related to the different cases in the 0-1 cost function. The most crucial step of this construction, and the one which requires Assumption iii), involves how to construct the part of the transportation plan close to the classification boundary. This is carried out in Steps 1-3 below, and is illustrated in Figure \ref{fig:OTPlan}. Step 4 constructs the (relatively simple) remainder of the transportation plan and wraps up the proof.
\nc
\begin{proof}

	Let us recall that by convention we have ordered the endpoints as $a_1(0)< b_1(0) < a_2(0 )< b_2(0) < \dots < a_K(0) < b_K(0)$. We can pick $\delta>0$ small enough so that for all $i>1$ we have 
	\[   b_{i-1} (0)+ \delta <  a_i(0) - \delta < a_{i}(0) +\delta < b_{i}(0) -\delta.\] 
	For $i=1$, if $a_1(0)$ is finite the same inequality applies, interpreting $b_{0}(0):=-\infty$. Similarly, 
		\[   a_{i} (0)+ \delta <  b_i(0) - \delta < b_{i}(0) +\delta < a_{i+1}(0) -\delta\] 
	for $i<K$, and if $b_K(0)< +\infty$ the same inequality applies, interpreting $a_{K+1}(0):=+\infty$. We notice that the solutions of the evolution equations \eqref{eqn:evolutiod-1d} and \eqref{eqn:evolutiod-1db}, which will possess local solutions under Assumption \ref{assump:densities}, are guaranteed to satisfy the necessary conditions \eqref{eqn:1D-Necessary}. This fact will be used repeatedly below.\nc

		 \begin{figure}[h]
 \centering
\def\svgwidth{.75\columnwidth}
\begingroup%
  \makeatletter%
  \providecommand\color[2][]{%
    \errmessage{(Inkscape) Color is used for the text in Inkscape, but the package 'color.sty' is not loaded}%
    \renewcommand\color[2][]{}%
  }%
  \providecommand\transparent[1]{%
    \errmessage{(Inkscape) Transparency is used (non-zero) for the text in Inkscape, but the package 'transparent.sty' is not loaded}%
    \renewcommand\transparent[1]{}%
  }%
  \providecommand\rotatebox[2]{#2}%
  \newcommand*\fsize{\dimexpr\f@size pt\relax}%
  \newcommand*\lineheight[1]{\fontsize{\fsize}{#1\fsize}\selectfont}%
  \ifx\svgwidth\undefined%
    \setlength{\unitlength}{482.97559363bp}%
    \ifx\svgscale\undefined%
      \relax%
    \else%
      \setlength{\unitlength}{\unitlength * \real{\svgscale}}%
    \fi%
  \else%
    \setlength{\unitlength}{\svgwidth}%
  \fi%
  \global\let\svgwidth\undefined%
  \global\let\svgscale\undefined%
  \makeatother%
  \begin{picture}(1,1.01460762)%
    \lineheight{1}%
    \setlength\tabcolsep{0pt}%
    \put(0,0){\includegraphics[width=\unitlength,page=1]{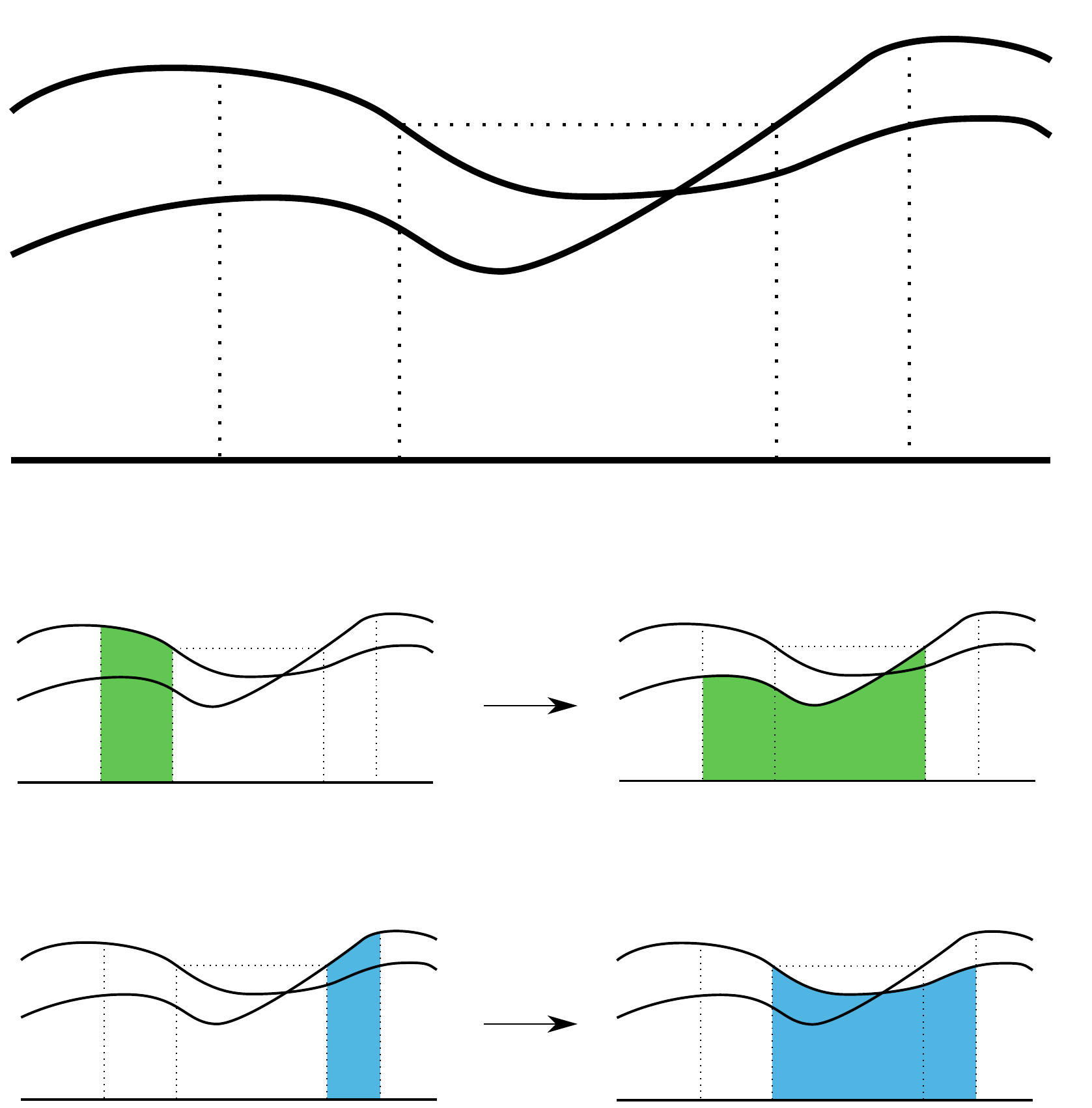}}%
    \put(0.00967104,0.82610685){\color[rgb]{0,0,0}\makebox(0,0)[lt]{\lineheight{1.25}\smash{\begin{tabular}[t]{l}$w_0 \rho_0$\end{tabular}}}}%
    \put(0.01938865,0.96093926){\color[rgb]{0,0,0}\makebox(0,0)[lt]{\lineheight{1.25}\smash{\begin{tabular}[t]{l}$w_1 \rho_1$\end{tabular}}}}%
    \put(0.19552109,0.52971849){\color[rgb]{0,0,0}\makebox(0,0)[lt]{\lineheight{1.25}\smash{\begin{tabular}[t]{l}$r_i^+$\end{tabular}}}}%
    \put(0.81927286,0.52971849){\color[rgb]{0,0,0}\makebox(0,0)[lt]{\lineheight{1.25}\smash{\begin{tabular}[t]{l}$\tilde{r}_i^+$\end{tabular}}}}%
    \put(0.69172858,0.52971849){\color[rgb]{0,0,0}\makebox(0,0)[lt]{\lineheight{1.25}\smash{\begin{tabular}[t]{l}$b_i + \varepsilon$\end{tabular}}}}%
    \put(0.33885642,0.52971849){\color[rgb]{0,0,0}\makebox(0,0)[lt]{\lineheight{1.25}\smash{\begin{tabular}[t]{l}$b_i - \varepsilon$\end{tabular}}}}%
  \end{picture}%
\endgroup%

\caption{Illustration of mass exchange defined by $\gamma_{b_i}$ (middle) and by $\tilde{\gamma}_{b_i}^{-1}$ (bottom).}
	\label{fig:OTPlan}
\end{figure}

	\textbf{Step 1: \blue [Construct matching from the left of $b_i$]} \blue
	Figure \ref{fig:OTPlan} provides a visual illustration of the particular construction that we use in our transportation plan near the boundary point $b_i(\e)$. In Step 1, we are focusing on constructing the mapping involving the green mass in the plot. This mapping is constructed in such a way that the total green mass on the left and the right are equal, and so that no mass needs to travel a distance greater than $2\e$. We notice that the heights at $b_i(\e) - \e$ and $b_i(\e) + \e$ are equal, and so that mass travels distance exactly $2\e$. The slope conditions assumed in Assumption \ref{assump:densities}, which we can show continues to hold locally near $b_i(0)$, is what allows us to infer that the rest of the green mass indeed is transported less than distance $2\e$. \nc 
	
	To begin the construction, let us fix a \blue particular $i$ corresponding to a finite right endpoint $b_i(\e)$ \nc and notice that for small enough $\veps>0$ we have
	\[ b_i(0) -\delta/2 \leq  b_i-\veps  < b_i+\veps < b_i(0)+\delta /2 <  a_{i+1}(0)- \delta , \]
	where we recall that $b_i=b_i(\veps)$ (we have dropped the dependence on $\veps$ to ease the notation).
	\nc
	Now, by Assumption \ref{assump:densities} we know that $w_0 \rho_0'(b_i(0)) \not = w_1 \rho_1'(b_i(0)) $. \blue Given that when $\e =0$ we have that $A_\e^*$ is the Bayes' classifier, we may deduce that $w_0\rho_0 <  w_1\rho_1 $ inside $(a_i(0), b_i(0))$. Hence $w_0 \rho_0'(b_i(0)) > w_1 \rho_1'(b_i(0))$. \nc Moreover, the fact that $\rho_0, \rho_1$ are $C^1(\R)$ allows us to deduce that
	\begin{equation} 
	w_0\rho_0'(t_0) > w_1 \rho_1'(t_1) 
	\label{eqn:densitiesDerivatives}
	\end{equation}
	for every $t_0,t_1 $ in  $[b_i(0)-\delta, b_i(0)+ \delta]$ (by making $\delta$ smaller if needed). In particular, for all $\veps>0$ small enough we have
\begin{equation}
	 \frac{d}{ds}  \left( w_0 \rho_0(b_i+\veps - s) \right)  <  \frac{d}{ds}  \left( w_1 \rho_1(b_i-\veps-s) \right), \quad \forall s \in(0, \delta/2 ).
	 \label{eqn:auxOPlanCons}
	\end{equation}
The above condition can be combined with the necessary condition for $b_i$ in \eqref{eqn:1D-Necessary} and the fundamental theorem of Calculus to obtain
	\begin{equation}
	w_0\rho_0( b_i+\veps-s) \leq w_1 \rho_1(b_i-\veps -s) , \quad \forall s \in (0, \delta/2),
	\label{eqn:AuxDensities1}
	\end{equation}
	for all small enough $\veps>0$.
	
	Let $r_i^+$  (which depends on $\veps$) be the largest number smaller than $b_i-\veps$ satisfying:
	\begin{equation}
	\int_{r_i^+}^{b_i-\veps} w_1 \rho_1(x)dx = \int_{r_i^+}^{b_i+\veps}w_0\rho_0(x) dx. 
	\label{eqn:BalancingCondition}
	\end{equation}
	 The existence of $r_i^+$ (at least for small enough $\veps$) follows from \eqref{eqn:auxOPlanCons} and condition \eqref{eqn:1D-Necessary}, which combined also imply that $r_i^+$  satisfies  $ b_i-\delta/2 \leq r_i^+ $. On the other hand,  we can see that $r_{i}^+$ also satisfies $r_i^+\leq b_i(0)$. Indeed, if $b_{i}-\veps \leq b_i(0)$ this is immediate. If on the other hand, $b_{i}-\veps > b_i(0)$ we see that for all $t \in [b_i(0), b_i-\veps]$ 
	 \[  \int_{t}^{b_i-\veps} w_1 \rho_1(x)dx < \int_{t}^{b_i+\veps}w_0\rho_0(x) dx, \]
	 because in the interval $( b_i(0), b_i+\veps)$ we have $w_0\rho_0 > w_1 \rho_1$. Therefore, $r_i^+ \leq b_i(0)$ in this case too.  In summary, 
	 \begin{equation}
r_i^+ \in [b_i-\delta/2, b_i(0) ].
\label{eqn:r+}
	 \end{equation}

	 Now we define the function $\phi_{b_i}: [r_i^+, b_i-\veps] \rightarrow [r_i^+ , b_i+\veps]$ as $t \mapsto \phi_{b_i}(t)$ where $\phi_{b_i}(t)$ is the largest number in $ [r_i^+, b_i-\veps]$ which satisfies:
	\[  \int_{t}^{b_i-\veps} w_1 \rho_1(x)dx = \int_{ \phi_{b_i}(t)}^{b_i+\veps}w_0\rho_0(x) dx.  \]
	Due to inequality \eqref{eqn:AuxDensities1}, $\phi_{b_i}$ satisfies:
	\begin{equation}
	|t-\phi_{b_i}(t)| \leq 2 \veps , \quad \forall t \in [r_i^+ , b_i-\veps].
	\label{eqn:PhibiIneq}
	\end{equation}
	The map $\phi_{b_i}$ induces a  measure $\gamma_{b_i}$ on $\R \times \R $ given by
	\[  \gamma_{b_i}:= (Id \times \phi_{b_i} )_{\sharp } \left(  w_1\rho_1 \measurerestr [r_i^+, b_i-\veps]  \right), \]
	whose first and second marginals are the measures $w_1\rho_1 \measurerestr [r_i^+, b_i-\veps]$ and $w_0\rho_0 \measurerestr [r_i^+, b_i+\veps]$ respectively; in the above $\sharp$ denotes the push-forward operation and $\measurerestr$ the restriction of a measure to a given set. \blue Here we recall that the push-forward of a measure $\mu$ (defined over a space $\Omega_1$) by a map $F:\Omega_1 \to \Omega_2$ is a measure on $\Omega_2$ defined by $F_\sharp \mu (B) = \mu(F^{-1}(B))$, where by $F^{-1}$ is the inverse image. \nc We also consider the inverse coupling $\gamma_{b_i}^{-1}$ defined according to the identity
	\[ \gamma_{b_i}^{-1}( D\times D') := \gamma_{b_i}( D'\times D), \]
	for all $D,D'$ measurable subsets of $\R$.

	\textbf{Step 2: \blue [Construct matching from the right of $b_i$]} In Step 2 we are repeating the same type of construction that we did in Step 1, but to the other side of the boundary point. In terms of the illustration in Figure \ref{fig:OTPlan}, we are now describing the transportation of the blue mass at the bottom of the figure. As in Step 1, we have to guarantee that such a mapping exists and transports mass at most distance $2\e$.\nc
	
	To achieve this goal, we consider a symmetric construction to the one from Step 1. Using again \eqref{eqn:densitiesDerivatives} and combining with the necessary condition for $b_i$ in \eqref{eqn:1D-Necessary} we obtain: 
	\begin{equation}
	w_1\rho_1( b_{i}-\veps + s)  \leq w_0\rho_0(b_{i}+\veps + s), \quad \forall s \in (0, \delta/2).
	\label{eqn:AuxDensities3}
	\end{equation}
	We let $\tilde r_i^+$  be the smallest number larger than $b_i+\veps$  that satisfies:
	\[ \int_{b_i-\veps}^{\tilde r_i^+} w_1\rho_1(x)dx = \int_{b_i+\veps}^{\tilde r_i^+}w_0\rho_0(x) dx.  \]
	This quantity can be shown to exist and to satisfy
	 \begin{equation}
\tilde r_i^+ \in [b_i(0), b_i+\delta/2]
\label{eqn:r+b}
\end{equation}
	using similar arguments to the ones employed in Step 1 \blue (including utilizing Assumption \ref{assump:densities}).  \nc
	
	We let $\tilde \phi_{b_i}: [ b_i-\veps, \tilde r_i^+] \rightarrow [ b_i+\veps, \tilde r_i^+ ]$ be the function defined as $t \mapsto \tilde  \phi_{b_i}(t)$ where $ \tilde \phi_{b_i}(t)$ is the smallest number in $ [b_i+\veps, \tilde r_i^+]$ which satisfies:
	\[  \int_{b_i-\veps}^{t} w_1 \rho_1(x)dx = \int_{b_i+\veps}^{ \tilde \phi_{b_i}(t)}w_0\rho_0(x) dx.  \]
	Inequality \eqref{eqn:AuxDensities3} implies 
	\begin{equation}
	|t-\tilde \phi_{b_i}(t)| \leq 2 \veps , \quad \forall t \in [b_i-\veps, \tilde r_i^+  ].
	\label{eqn:PhibiIneq2}
	\end{equation}
	The map $\tilde \phi_{b_i}$ induces a  measure $\tilde \gamma_{b_i}$ on $\R \times \R $ given by
	\[  \tilde \gamma_{b_i}:= (Id \times \tilde \phi_{b_i} )_{\sharp } \left(  w_1\rho_1 \measurerestr [b_i-\veps, \tilde r_i^+ ]  \right), \]
	whose first and second marginals are the measures $w_1\rho_1 \measurerestr [ b_i-\veps,\tilde r_i^+]$ and $w_0\rho_0 \measurerestr [ b_i+\veps, \tilde r_i^+]$ respectively. We also consider the inverse coupling $\tilde \gamma_{b_i}^{-1}$.


	\textbf{Step 3: \blue [Construct matchings for $a_i$\nc]} So far we have constructed measures $\gamma_{b_i}, \gamma^{-1}_{b_i}, \tilde{\gamma}_{b_i} , \tilde{\gamma}_{b_i}^{-1}$ relative to a finite right endpoint $b_i$, but following a completely analogous scheme, \blue and again utilizing Assumption \ref{assump:densities}, \nc we can introduce measures $\gamma_{a_i}, \gamma^{-1}_{a_i}, \tilde{\gamma}_{a_i} , \tilde{\gamma}_{a_i}^{-1}$ satisfying completely equivalent properties to their $a_i$ counterparts. In particular,  for a finite left endpoint $a_i$ we introduce two quantities $r_i^-$ and $\tilde r_{i}^-$ that satisfy
	\[  r_i^- \in  [ a_i(0), a_i+\delta/2  ], \quad   \tilde r_i^- \in  [  a_i-\delta/2, a_i(0)  ] , \]
	\begin{equation*}
	\int_{\tilde r_i^-}^{a_i+\veps} w_1 \rho_1(x)dx = \int_{\tilde r_i^-}^{a_i-\veps}w_0\rho_0(x) dx, \quad \int_{a_i+\veps}^{ r_i^-} w_1 \rho_1(x)dx = \int_{a_i-\veps}^{ r_i^-}w_0\rho_0(x) dx.
	\end{equation*}
	Two maps $\phi_{a_i}: [a_i+\veps, r_{i}^-] \rightarrow  [a_i-\veps, r_{i}^-] $  and  $\tilde \phi_{a_i}: [\tilde r_{i}^-, a_i+\veps ] \rightarrow  [ \tilde r_{i}^-,a_i-\veps] $  satisfying 
	\[  	|t-\phi_{a_i}(t)| \leq 2 \veps , \quad \forall t \in [a_i+\veps, r_{i}^-], \quad 	|t- \tilde \phi_{a_i}(t)| \leq 2 \veps , \quad \forall t \in [\tilde r_i^- , a_i+\veps].  \]
	 can be constructed. These maps induce the couplings 
	 \[ \gamma_{a_i}= (Id \times  \phi_{a_i} )_{\sharp } \left(  w_1\rho_1 \measurerestr [a_i+\veps, r_i^- ]  \right) \quad \text{ and } \quad  \tilde{\gamma}_{a_i}=(Id \times \tilde \phi_{a_i} )_{\sharp } \left(  w_1\rho_1 \measurerestr [\tilde r_i^-,a_i+\veps  ]  \right).\]

%
%

	\textbf{Step 4:\blue [Construct remainder of plan and compute cost]\nc} In addition to the constructions in Steps 1-3 we introduce $\tilde r_0^+=-\infty$ and $\tilde r_{K+1}^-=+\infty  $. Also, we set $\tilde r_1^-= r_1^-=-\infty$ in case $a_1(0)=-\infty$ and $r_K^+=\tilde r_K^+=+\infty$ in case $b_K(0)=+\infty$. We now define the desired transport plan $\pi_\veps$. 
	
	Let $\nu_0^R, \nu_1^R$ be the measures on $\R$ given by:
	\[  \nu_0^R :=    \sum_{i=1}^K  (w_1\rho_1 - w_0\rho_0)\measurerestr[r_i^-, r_i^+]  \]
	\[ \nu_1^R := \left(\sum_{i=1}^K  \left(   w_0\rho_0 - w_1 \rho_1\right) \measurerestr[\tilde r_{i-1}^+, \tilde r_{i}^-] \right)  + \left(   w_0\rho_0 - w_1 \rho_1\right) \measurerestr[\tilde r_{K}^+, \tilde r_{K+1}^-] ,\]
	we notice that since $[r_i^-, r_i^+] \subseteq [a_i(0), b_i(0)]$, $\nu_0$ is indeed a positive measure. Similarly, we can see that $\nu_1^R$ is a positive measure too . From our construction it follows that 
	\begin{equation}\int_{r_i^+}^{\tilde r_i^+} w_0 \rho_0 \,dx =\int_{r_i^+}^{\tilde r_i^+} w_1 \rho_1 \,dx,\qquad \int_{\tilde r_i^-}^{ r_i^-} w_0 \rho_0 \,dx =\int_{\tilde r_i^-}^{ r_i^-} w_1 \rho_1 \,dx,\label{eqn:balanced-transitions}\end{equation}
	for all $i$, and hence 
	\[\nu_0^R(\R) = \nu_1^R(\R)+ w_1-w_0.\] 
	Let $\pi^R$ be \textit{any} coupling between the measures
	\[  (\nu_0^R \otimes \delta_{0} + \nu_1^R \otimes \delta_{1}  ),  \text{ and } (  \nu_1^R \otimes \delta_{0}  + \nu_0^R \otimes \delta_{1}  ); \]
	notice that $\pi^R$ is a measure  on $(\R \times \{ 0,1\})^2$. This is always possible using a product coupling. In the above $\otimes$ is used to denote the product of two measures.

	Let $\pi^0$ be the measure on $(\R \times \{0,1\})^2 $ given by:
	\begin{align*}
	\pi^0(dz_1, dz_2) & := \sum_{i=1}^K (  \gamma_{b_i}(dx_1,dx_2) \otimes  \delta_{ \{0\} \times \{ 0\} }(dy_1,d y_2) + \gamma_{b_i}^{-1}(dx_1,dx_2)  \otimes  \delta_{ \{1\} \times \{ 1\} }(dy_1,d y_2)
	\\& +  \tilde \gamma_{b_i}(dx_1,dx_2) \otimes  \delta_{ \{0\} \times \{ 0\} }(dy_1,d y_2) + \tilde \gamma_{b_i}^{-1}(dx_1,dx_2)  \otimes  \delta_{ \{1\} \times \{ 1\} }(dy_1,d y_2) 
	\\&+\gamma_{a_i}(dx_1,dx_2) \otimes  \delta_{ \{0\} \times \{ 0\} }(dy_1,d y_2) + \gamma_{a_i}^{-1}(dx_1,dx_2)  \otimes  \delta_{ \{1\} \times \{ 1\} }(dy_1,d y_2)
	\\& +  \tilde \gamma_{a_i}(dx_1,dx_2) \otimes  \delta_{ \{0\} \times \{ 0\} }(dy_1,d y_2) + \tilde \gamma_{a_i}^{-1}(dx_1,dx_2)  \otimes  \delta_{ \{1\} \times \{ 1\} }(dy_1,d y_2)).
	\end{align*}
	The first and fourth terms in this expression with eight terms are the mass exchanges illustrated in Figure \ref{fig:OTPlan}. The other terms have similar interpretations. Finally, we let $\pi^F$ be the measure on  $(\R \times \{0,1\})^2 $ given by $\pi^F:= (Id \times Id)_{\sharp} ( \nu - (\pi^0+\pi^R )_1   ) $, where $(\pi^0+\pi^R )_1 $ is the first marginal of $\pi^0+\pi^R$.

	With all the above definitions in hand, we can now introduce:
	\begin{align*}
	\begin{split}
	\pi_\veps := \pi^0 + \pi^R +\pi^F .
	\end{split}
	\end{align*}
	Here, $\pi^0$ satisfies the property that for all the points in its support  $c_\veps=0$. $\pi^F$ corresponds to the mass that is fixed and thus does not contribute to the cost of $\pi_\veps$. Finally, $\pi^R$ corresponds to the remaining mass. Our construction then guarantees that 
	\[ \int c_\veps(z_1, z_2) d\pi_\veps(z_1, z_2) = \int c_{\veps}(z_1, z_2) d\pi^R(z_1, z_2) \leq \pi^R((\R \times \{0,1 \})^2) \] 
	\[ = \nu_0^R(\R) + \nu_1^R(\R)  = 2 \nu_0^R(\R) + w_0 -w_1 = 2 \sum_{i=1}^K \int_{r^-_i}^{r_i^+} (w_1 \rho_1 - w_0 \rho_0) dx  +w_0-w_1 . \] 
 In turn, 
 \begin{align*}
 \sum_{i=1}^K \int_{r^-_i}^{r_i^+} (w_1 \rho_1 - w_0 \rho_0) dx   &=  \sum_{i=1}^K \left( \int_{a_i+\veps}^{b_i-\veps} w_1 \rho_1 dx - \int_{a_i-\veps}^{b_i+\veps} w_0 \rho_0 dx    \right) 
 \\& =  \int_{(A^*)^{-\veps}} w_1 \rho_1 dx - \int_{(A^*)^{\veps}} w_0 \rho_0 dx ,
 \end{align*}
thanks to equation \eqref{eqn:BalancingCondition} and the analogues for the $a_i$. Thus,
	\[  \frac{1}{2}\int c_\veps(z_1, z_2) d\pi_\veps(z_1, z_2) + \frac{1}{2}(w_1-w_0) \leq  \int_{(A^*)^{-\veps}} w_1 \rho_1 dx - \int_{(A^*)^{\veps}} w_0 \rho_0 dx,  \]
	which thanks to Remark \ref{rem:Duality} implies that $A^{*}$ solves the optimization problem and that the above inequality is actually an equality.
	
\end{proof}

\nc

\begin{remark}
  The construction of transportation plans in the previous proof is possible due to the necessary conditions \eqref{eqn:1D-Necessary} that are maintained by the evolution equations \eqref{eqn:evolutiod-1db} and \eqref{eqn:evolutiod-1d}. Indeed, the necessary conditions are crucially used to prove the equalities in \eqref{eqn:balanced-transitions}, which factored prominently in the construction of the certifying transportation plan $\pi_\e$. 
\end{remark}

\begin{remark}
  The construction in the previous proof is local, in the sense that we can only show that solutions to our evolution equations are global minimizers for $\e$ sufficiently small. However, the proof of the previous proposition indicates some situations where one can detect that these solutions cease to be global minimizers. For example, if at some point $r_i^+ = \tilde r_{i+1}^-$ then one expects that the construction may not be continued for larger $\e$. This should correspond to a change in topology of the global optimizer. \blue On the other hand, the fact that global minimizers are characterized by the differential equation implies that the topology of the minimizers does not change for small values of $\e$, and if one chose to track the range of values for which the constructions in the previous proof were valid (which would depend upon the difference between $w_0\rho_0'$ and $w_1 \rho_1'$ and upon the $C^1$ norms of the densities) one could quantify the range of $\e$ for which the topology does not change.\nc Understanding the type of degeneracies that may arise when solving the geometric evolution equations, \blue and the associated changes in topology of the optimizers,\nc as well as their implications to the adversarial risk minimization problem are topics of current investigation. 
\end{remark}

\nc

%

\nc

%
%
%
%

\section{Necessary conditions and geometric evolution equations in higher dimension}\label{sec:high-D}

In one dimension, the necessary condition allowed us to derive an ordinary differential equation that described the motion of decision boundaries as we increased the adversarial power $\e$. This evolution equation was driven, for small $\e$, by the gradient of $\rho$. In higher dimension the optimality conditions and their associated geometric evolution equations are necessarily more complex. In particular, the presence of curvature in higher dimensions introduces a greater degree of complexity. \blue For clarity, throughout our study of dimension $d>1$ we will restrict our attention to the standard Euclidean metric, namely we let $d(x_1,x_2) = |x_1-x_2|$. \nc

To begin, we will develop some intuition about the problem by studying an explicit, radial example.

\begin{example}
Let us consider the case where $\rho$ is a uniform distribution on a ball of radius $1$ in $\R^d$, and $w_0\rho_0(x) = \frac{|x|}{\omega_d}$, with $\omega_d$ the $\mathcal{L}^d$ measure of the unit ball. Here the Bayes classifier is given by $u_B(x) = \mathds{1}_{|x|\leq 1/2}$. We then consider a classifier, parameterized in $\e$, which (by way of ansatz) is given by $\mathds{1}_{|x| \leq r(\e)}$, which minimizes the adversarial cost. \blue Necessary conditions for optimality then take the form
  \begin{align*}
  0 &= \lim_{\delta \to 0} \frac{ R_\e(r(\e)+\delta) - R_\e(r(\e))}{\delta}\\
 &= \omega_d  w_0 \rho_0(r(\e)+\e) \cdot (r(\e)+\e)^{d-1} - \omega_d  w_1 \rho_1(r(\e) - \e) \cdot ( r(\e)-\e)^{d-1} \\
        &=(r(\e) + \e) \cdot (r(\e) + \e)^{d-1} - (1-(r(\e) - \e)) \cdot ( r(\e)-\e)^{d-1}, \\
      \end{align*}
      where here we are abusing notation slightly and writing $\rho_1(t)$ and $\rho_0(t)$ to represent $\rho_1(x)$ and $\rho_0(x)$ for all $x$ such that $|x|=t$, and writing $R_\e(s) = R_\e(\mathds{1}_{|x| \leq s})$.\nc Taking a derivative in $\e$ we obtain
\[
  d (r(\e) + \e)^{d-1} \left( \frac{d}{d\e} r + 1\right) =   \left((d-1)(r(\e) - \e)^{d-2} - d(r(\e) - \e)^{d-1} \right) \left( \frac{d}{d\e} r - 1\right),
\]
which may be written
\[
\frac{dr}{d\e} = -\frac{d (r(\e) + \e)^{d-1} + \left((d-1)(r(\e) - \e)^{d-2} - d(r(\e) - \e)^{d-1} \right)}{d (r(\e) + \e)^{d-1} - \left((d-1)(r(\e) - \e)^{d-2} - d(r(\e) - \e)^{d-1} \right)}.
\]
At $\e=0$ this becomes
\[
\frac{dr}{d\e} (\e = 0) = -\frac{(d-1)r^{d-2}}{2d r^{d-1} - (d-1)r^{d-2}} = - \frac{(d-1)r^{-1}}{2d - (d-1)r^{-1}}
\]
Recalling that $r^{-1} = \kappa$ is the mean curvature of a sphere of radius $r$ in $\R^d$, we immediately see the effect of curvature, namely that this evolution corresponds, at $\e=0$ to a mean curvature flow \blue that has been reweighted in the denominator by a density-dependent factor\nc. We notice that here, $\nabla \rho \equiv 0$, which in the one dimensional case dominated the evolution for small $\e$ regimes. This example was specifically chosen in order to highlight the effect of curvature, but we will subsequently see that both curvature and $\nabla \rho$ play a role in the surface evolution.

\blue We remark that, in order to make the formulas explicit, we choose to work with the uniform density on the ball, which has non-smooth density. We notice that the classification boundary occurs in the region where the density is smooth, and so mollifying the density near the boundary of the ball would not materially affect the behavior of the example besides complicating the formulas for the total density. \nc


\end{example}

With the previous example in mind, we derive the necessary condition, assuming that the decision boundary is sufficiently smooth.

\begin{proposition}\label{prop:nec-nD}
  Suppose that $A_\e$ is a critical point of the problem $R_\e$ (with respect to normal variations \cite{Maggi-Book}, further description given in the proof below) and that the signed distance function $\tilde d_{A_\e}$ is $C^3$ on the set $|\tilde d_{A_\e}| < 2\e$. For $x \in \partial A_\e$, let $\nu$ denote the outward unit normal and $\kappa_i$ denote the principal curvatures \blue (see the Appendix for a definition)\nc. Then the following necessary condition holds for almost every $x \in \partial A_\e$:
\begin{equation}\label{eqn:nD-necessary}
w_1 \rho_1(x -\e \nu(x)) \prod_{i=1}^{d-1} |1 - \kappa_i \e| - w_0\rho_0(x +\e \nu(x)) \prod_{i=1}^{d-1} |1 +  \kappa_i \e| = 0.
\end{equation}

\end{proposition}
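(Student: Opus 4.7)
The plan is to compute the first variation of $R_\e$ along smooth normal perturbations of $\partial A_\e$ and then apply the fundamental lemma of calculus of variations to extract a pointwise Euler--Lagrange condition. Starting from the rewriting
\[ R_\e(\mathds{1}_A) = \int_{A^{\veps}} \nrho{0}\,dx + w_1 - \int_{A^{-\veps}} \nrho{1}\,dx \]
derived in Section \ref{sec:setup}, I would consider a one-parameter family $\{A_t\}_{t\in (-\tau,\tau)}$ with $A_0 = A_\e$ whose boundaries evolve with normal velocity $v(x)\nu(x)$ for an arbitrary smooth, compactly supported $v:\partial A_\e \to \R$. The criticality hypothesis then reduces the problem to differentiating $\int_{A_t^{\pm \e}} f\,dx$ in $t$ at $t=0$ for $f = \nrho{0}, \nrho{1}$.

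The central geometric input is a tube-formula calculation. Because $\tilde d_{A_\e}$ is $C^3$ on $\{|\tilde d_{A_\e}| < 2\e\}$, the map $(x,s)\mapsto x + s\nu(x)$ is a $C^2$ diffeomorphism from $\partial A_\e \times (-2\e, 2\e)$ onto this tubular neighborhood, with tangential Jacobian $\prod_{i=1}^{d-1}|1 + s\kappa_i(x)|$. In particular, $\partial A_\e^s = \{x + s\nu(x) : x \in \partial A_\e\}$ is a parallel hypersurface whose unit normal at $x + s\nu(x)$ equals $\nu(x)$. A short perturbation argument (solving $\tilde d_{A_\e}(y) + tv(\pi(y)) = \pm\e$ via the implicit function theorem, where $\pi$ denotes closest-point projection onto $\partial A_\e$) shows that, to first order in $t$, $\partial A_t^\e \approx \{x + (\e + tv(x))\nu(x) : x \in \partial A_\e\}$, so that $\partial A_t^\e$ moves normally with speed $v\circ\pi$, and analogously for $\partial A_t^{-\e}$.

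Combining the standard first-variation formula for a volume integral with the tangential change of variables $y = x \pm \e\nu(x)$ then yields
\[ \frac{d}{dt}\bigg|_{t=0} \int_{A_t^{\e}} \nrho{0}\,dx = \int_{\partial A_\e} \nrho{0}(x + \e\nu(x))\, v(x) \prod_{i=1}^{d-1}|1 + \e\kappa_i(x)|\, d\mathcal{H}^{d-1}(x), \]
and similarly for $\int_{A_t^{-\e}} \nrho{1}\,dx$ with $\nrho{1}(x - \e\nu(x))$ and $\prod_i |1 - \e\kappa_i(x)|$. Setting the resulting first variation of $R_\e$ to zero for every admissible $v$ and invoking the fundamental lemma of the calculus of variations delivers \eqref{eqn:nD-necessary} for $\mathcal{H}^{d-1}$-almost every $x \in \partial A_\e$.

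The main obstacle is rigorously justifying the differentiation under the integral sign and the parallel-surface change of variables. Both steps rely crucially on the $C^3$ regularity of $\tilde d_{A_\e}$ on the tubular neighborhood $\{|\tilde d_{A_\e}| < 2\e\}$: this ensures that the exponential map is a diffeomorphism there, that the principal curvatures $\kappa_i$ extracted from $D^2 \tilde d_{A_\e}$ are well-defined and continuous on $\partial A_\e$, and that each factor $1 \pm s\kappa_i(x)$ has constant (non-vanishing) sign for $|s|<2\e$, so that the absolute values in the Jacobian simply encode this sign. This regularity is also what allows the formal calculation $\frac{d}{dt}\int_{A_t^{\pm \e}} f = \int_{\partial A^{\pm\e}} f\,\tilde v\, d\mathcal{H}^{d-1}$ to be carried out via a coarea/dominated convergence argument.
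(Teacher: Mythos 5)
Your proposal is correct and follows essentially the same route as the paper's proof: a first variation of $R_\e$ under normal variations, the observation that the $\pm\e$ level sets of $\tilde d_{A_\e}$ move with the same normal speed as $\partial A_\e$ (composed with the closest-point projection), the parallel-surface change of variables with Jacobian $\prod_i |1\pm\e\kappa_i|$ from Corollary \ref{cor:change-of-variables}, and the fundamental lemma of the calculus of variations. The only quibble is the sign in your implicit-function-theorem equation (perturbing the boundary outward with speed $v$ decreases $\tilde d$, so the level-set equation should read $\tilde d_{A_\e}(y) - t v(\pi(y)) = \pm\e$), but the conclusion you draw from it is the correct one.
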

\nc

\begin{proof}
We again recall
\begin{align*}
  R_\e(\mathds{1}_A) &= \int_{\tilde d_A(x)<-\e} w_0(x)\rho_0(x) \,dx + \int_{\tilde d_A(x) > \e} w_1\rho_1(x)\,dx \\
  &+ \int_{|\tilde d_A(x)| < \e} \rho(x)\,dx.
\end{align*}
We consider the class of normal variations \cite{Maggi-Book} of the set $A=A_\veps$: that is, we consider a one parameter family of sets $A^t$ of the form $A^t = \phi(t,A)$ for some diffeomorphism $\phi(t,x)$ which satisfies $\phi(0,A)=A$ and $\frac{d\phi}{dt}(t=0) = F(x)$, where $F$ satisfies $F(x) = \nu(x) \psi(x)$ for $x \in \partial A$ and for some smooth scalar valued function $\psi$ \blue that we assume, without loss of generality, satisfies $\nabla \psi(x) \cdot \nu(x)=0 $ for all $x \in \partial A$ \nc. Taking the derivative of $R_\e (\mathds{1}_{A^t})$ and evaluating it at $t = 0$, we obtain that
\[
  0 =   \int_{\tilde d_A(y) = \e} w_0 \rho_0(y) \psi(P_{\partial A}(y)) \,d\mathcal{H}^{d-1}(y)- \int_{\tilde d_A(y) = -\e} w_1 \rho_1(y) \psi(P_{\partial A}(y)) \,d\mathcal{H}^{d-1}(y),
\]
where here $P_{\partial A}(x)$ is the projection of $x$ onto the boundary of $A$, \blue meaning the point in the boundary of $A$ which is closest to $x$, whose uniqueness is guaranteed by the assumption upon the regularity of the signed distance function.\nc

Noting that $y = x \pm \e \nu(x)$ in the previous two integrals, we then use a change of variables as in Corollary \ref{cor:change-of-variables} to convert to
\[
  0 = \int_{\tilde d_A(x) = 0} \left(w_0\rho_0(x + \e \nu(x)) \prod_{i=1}^{d-1} |1 + \kappa_i(x) \e| - w_1\rho_1(x - \e \nu(x))\prod_{i=1}^{d-1} |1 - \kappa_i(x) \e| \right) \psi(x) \,d\mathcal{H}^{d-1}(x).
\]
Since this holds for all smooth $\psi$, we then have that, for $\mathcal{H}^{d-1}$ almost every $x \in \partial A$
\begin{equation}\label{eqn:multi-d-necessary}
0 = \left(w_0\rho_0(x + \e \nu(x)) \prod_{i=1}^{d-1} |1 + \kappa_i(x) \e| - w_1\rho_1(x - \e \nu(x)) \prod_{i=1}^{d-1} |1 - \kappa_i(x) \e| \right)
\end{equation}
\end{proof}

We \blue notice \nc that assuming that the conditional densities $\rho_0,\rho_1$ are smooth is not sufficient to guarantee that the set of $x$'s for which $w_0\rho_0 -w_1\rho_1 = 0$ is smooth, as evidenced by the following basic example:

\begin{example}
  Suppose in $\R^2$ that one places normals associated with $y=+1$ at $(1,1)$ and $(-1,-1)$, and then places normals associated with $y=-1$ at $(1,-1)$ and $(-1,1)$, with $w_0 = w_1 = 1/2$. In this case the set where $w_0\rho_0 =w_1\rho_1$ is given by the set $\{x=0\} \cup \{y=0\}$, which is not smooth at $(0,0)$.
\end{example}

\blue
  In Proposition \ref{prop:nec-nD} we notice that the necessary condition directly utilizes the normal vectors to the surface and the curvatures (which may be viewed as derivatives of the normal vectors). These notions are intimately tied with the classical Euclidean geometry: indeed if we did not have $d(x_1,x_2) = |x_1=x_2|$ then the previous theorem would need to be modified in order to accommodate for normal vectors and their derivatives in the appropriate geometry. Such definitions have been pursued in the context of mean curvature flow, for example in \cite{bellettini2004anisotropic}. However, extending those definitions to apply to the present context is beyond the scope of this work.
\nc

\subsection{Geometric flow}\label{sec:multi-d-evolution}

In this section we seek to formally derive a geometric flow which characterizes the evolution of the boundary of the $A_\e$. As in the one-dimensional case, we can Taylor expand for $\e$ small to derive an approximating geometric flow which is more transparent and easier to interpret.

To begin, let us suppose that $\phi(\e,x)$ be a diffeomorphism so that $\phi(\e,A) = A_\e$. We shall utilize the necessary condition \eqref{eqn:nD-necessary} to characterize this diffeomorphism for points $x \in \partial A_0$.

We now use a chain rule on the necessary condition as follows (suppressing the dependence on $x,\e$, and always assuming that $x \in \partial A_0$):
\begin{align}\label{eqn:d-dim-evolution-exact}
\begin{split}
  &0 = \frac{d}{d\e}\left( w_0\rho_0(\phi + \e \nu(\phi)) \prod_{i=1}^{d-1} (1 + \e \kappa_i(\phi)) - w_1\rho_1(\phi - \e \nu(\phi) \prod_{i=1}^{d-1} (1 - \e \kappa_i(\phi)) \right) \\
    &= \prod_{i=1}^{d-1} (1 + \e \kappa_i(\phi)) \left( \nabla w_0\rho_0(\phi + \e \nu(\phi) \left(\frac{d}{d\e} \phi + \nu(\phi) + \e \frac{d}{d\e} (\nu(\phi))\right) + w_0\rho_0(\phi + \e \nu(\phi))\sum_i \frac{\kappa_i(\phi) + \e \frac{d}{d\e} \kappa(\phi)}{1+\e \kappa_i(\phi)} \right) \\
      &-\prod_{i=1}^{d-1} (1 - \e \kappa_i(\phi)) \left( \nabla w_1\rho_1(\phi - \e \nu(\phi) \left(\frac{d}{d\e} \phi - \nu(\phi) - \e \frac{d}{d\e} (\nu(\phi))\right) + w_1\rho_1(\phi - \e \nu(\phi))\sum_i \frac{-\kappa_i(\phi) - \e \frac{d}{d\e} \kappa(\phi)}{1-\e \kappa_i(\phi)} \right)
      \end{split}
\end{align}
One major challenge here is that $\frac{d}{d\e} \nu(\phi)$ and $\frac{d}{d\e} \kappa$ will involve mixed derivatives, i.e. derivatives in both $\e$ and $x$. Indeed, we recall \blue (e.g. Section 17.1 in \cite{Maggi-Book})  that, in terms of $\phi$ and for $x \in \partial A_0$, one may express the geometric quantity $\nu$ (the outward surface normal) as
\begin{align*}
   \nu(\phi(\e,x)) &= \frac{(\frac{\partial}{\partial x} \phi(\e,x))^{-T} \cdot \nu_0(x)}{ |(\frac{\partial}{\partial x} \phi(\e,x))^{-T} \cdot \nu_0(x)| }\nc
\end{align*}
\blue Similarly, the curvatures $\kappa_i$ may be expressed in terms of appropriate spatial derivatives of $\nu$, \blue more precisely, as the non-trivial eigenvalues of the matrix $\frac{\partial \nu}{\partial x}$; see Proposition \ref{eq:PropsDistance} in the Appendix. \nc Thus for $\e>0$ this evolution equation is a non-local, mixed-type partial differential equation, which appears difficult to solve.

However, each of the terms involving mixed derivatives is pre-multiplied by $\veps$, and hence may plausibly be ignored for $\e$ sufficiently small. To this end, we rearrange the previous equation
\begin{align}\label{eqn:d-dim-evolution}
  \begin{split}
  &\left( \prod_{i=1}^{d-1} (1 + \e \kappa_i(\phi)) \nabla w_1\rho_1(\phi + \e \nu(\phi)) - \prod_{i=1}^{d-1} (1 - \e \kappa_i(\phi)) \nabla w_0\rho_0(\phi - \e \nu(\phi))\right) \frac{d}{d\e}\phi \\
  &= - \prod_{i=1}^{d-1} (1 + \e \kappa_i(\phi)) \left( \nabla w_1\rho_1(\phi + \e \nu(\phi) ( \nu(\phi) + \e \frac{d}{d\e} \nu(\phi)) + w_1\rho_1(\phi + \e \nu(\phi))\sum_i \frac{\kappa_i(\phi) + \e \frac{d}{d\e} \kappa(\phi)}{1+\e \kappa_i(\phi)} \right) \\
&-\prod_{i=1}^{d-1} (1 - \e \kappa_i(\phi)) \left( \nabla w_0\rho_0(\phi - \e \nu(\phi) (  \nu(\phi) + \e \frac{d}{d\e} \nu(\phi)) + w_0\rho_0(\phi - \e \nu(\phi))\sum_i \frac{\kappa_i(\phi)  +\e \frac{d}{d\e} \kappa(\phi)}{1-\e \kappa_i(\phi)} \right).
\end{split}
\end{align}
Evaluating at $\e=0$, we find that
\[
  (w_1 \nabla \rho_1 - w_0 \nabla \rho_0) \frac{d\phi}{d\e} = - \left(\nabla \rho\cdot \nu + \rho \sum_i \kappa_i\right).
\]
If we express $\frac{d \phi}{d\e} = v \nu$, namely we consider the normal speed $v$, then we may write
\begin{equation}\label{eqn:d-dim-evolution-approx}
v(x,\e=0) =  -\frac{\nabla \rho\cdot \nu + \rho \sum_i \kappa_i}{(w_1 \nabla \rho_1 - w_0 \nabla \rho_0)\cdot \nu}
\end{equation}
Here we observe two terms: one which induces motion ``downhill'' in $\rho$ and a second which is a positively weighted mean curvature term. As we have used $\nu$ as an outwardly pointing normal vector, the $-\sum \kappa$ will correspond to the standard mean curvature flow. This indicates that heuristically, near $\e=0$, the optimal adversarial classifier seeks to i) go downhill in $\rho$, and ii) decrease the perimeter of the decision boundary (since mean curvature flow is a type of gradient flow of perimeter; see Section \ref{sec:Perimeter} below.) While the reweighing in the denominator is not homogeneous, and indeed makes this heuristic description imprecise, we believe this heuristic picture is helpful for understanding the local effects induced by adversarial robustness.

\nc

\blue Mean curvature flow, namely the case where $v(x) = \sum_i \kappa_i$ without any density weighting, is a fundamental geometric flow. One reference text, among many, on the topic is \cite{mantegazza2011lecture}. This geometric flow is known to be the gradient flow of the perimeter or area functional with respect to a certain function space. It is also known to induce increased smoothness in surfaces, when measured in the correct function spaces. Finally, mean curvature flow obeys a comparison principle and admits efficient numerical methods. While the flow that we have derived for the adversarial problem does not match mean curvature flow exactly, one of the terms in the approximate surface evolution at $\e=0$ amounts to a scalar function times mean curvature flow, suggesting that the flow induced by the adversarial problems also may enjoy similar useful characteristics. In the next section we take this analogy one step further by deriving a variant of perimeter regularization, which matches the adversarial evolution equation to higher order.
\nc

\blue

\subsubsection{Connection with explicit perimeter regularization}
\label{sec:Perimeter}

As mentioned at the end of the Introduction, there is a close relationship between the evolution equation derived earlier and the one that one would obtain by tracking solutions to the family of problems \eqref{eqn:PerimeterMinimization} indexed with $\veps$. In what follows we elaborate on this statement. It will be convenient to write $R$ explicitly as:
\[ R(\mathds{1}_A)= \int_{A} w_0 \rho_0(x) dx + w_1 - \int_{A} w_1 \rho_1(x)dx. \]

First, let us derive necessary conditions for an optimal solution $A$ to problem $\eqref{eqn:PerimeterMinimization}$. We assume that $A$ has at least $C^2$ boundary for simplicity. Let $\phi$ be a normal variation of $A$ as considered in Section \ref{sec:multi-d-evolution} with the same notation used there. Then the following two relations hold:
\begin{align*}
   \frac{d}{dt} \Bigr|_{t=0} R(\mathds{1}_{A^t}) & =  \frac{d}{dt} \Bigr|_{t=0} \left( \int_{A^t} (w_0 \rho_0(x) - w_1 \rho_1(x) ) dx   \right)  
   \\&= \int_{\partial A} \psi(x) (w_0  \rho_0 (x) - w_1  \rho_1 (x) ) d \HH^{d-1}(x),
\end{align*}
and
\begin{align*}
   \frac{d}{dt} \Bigr|_{t=0} \Per_\rho(A^t) & =  \frac{d}{dt} \Bigr|_{t=0} \left( \int_{\partial A^t} \rho( z ) d\HH^{d-1}(z)   \right)  
   \\&=  \frac{d}{dt} \Bigr|_{t=0} \int_{\partial A} \rho \left(x+ t \psi(x) \nu(x)\right)    \left| \det \left( I + t \psi(x) \frac{\partial \nu(x)}{\partial x} \right) \right| \HH^{d-1}(x)
   \\&=\int_{\partial A} \psi(x) ( \nabla \rho(x) \cdot \nu(x)  +   \rho(x) \sum_{i}\kappa_i ) d\HH^{d-1}(x). 
\end{align*}
We conclude that
\begin{align*}
0 & =  \frac{d}{dt} \Bigr|_{t=0} \left( R(\mathds{1}_{A^t}) + \veps  \Per_\rho(A^t)  \right)
\\& =  \int_{\partial A} \psi(x) ( w_0  \rho_0 (x) - w_1 \rho_1 (x)  + \veps (\nabla \rho(x) \cdot \nu(x)  +   \rho(x) \sum_{i}\kappa_i))   d \HH^{d-1}(x). 
\end{align*}
Since the normal variation was arbitrary, and thus the scalar function $\psi$ was too, we deduce the necessary condition:
\begin{equation}
0 =  w_0  \rho_0 (x) - w_1 \rho_1 (x) + \veps (\nabla \rho(x) \cdot \nu(x)  +   \rho(x) \sum_{i}\kappa_i), \quad x \in \partial A.
\label{eq:NecessaryRegularization}
\end{equation}

Let $A_\veps$ be a solution to problem \eqref{eqn:PerimeterMinimization} for a given value of $\veps$ and let us assume that the family $\{ A_\veps \}_{\veps>0}$ can be represented via a one-parameter family of diffeomorphisms $\{ \phi(\veps, \cdot) \}_{\veps>0}$ so that $\phi(\veps, A) =  A_\veps$, where $A$ is the set induced by the Bayes classifier. As in Section \ref{sec:multi-d-evolution}, we now use the necessary conditions \eqref{eq:NecessaryRegularization} at each point $x$ to characterize the family of diffeomorphisms at $\veps=0$ and $x \in \partial A$. Indeed, differentiating the equation
\begin{align*}
0 & =  w_0 \rho_0 (\phi(\veps,x)) - w_1  \rho_1 ( \phi(\veps, x)) 
\\& + \veps (\nabla \rho(\phi(\veps, x)) \cdot \nu(\phi(\veps, x))  +   \rho(\phi(\veps, x)) \sum_{i}\kappa_i(\phi(\veps, x))), \quad x \in \partial A, \quad \veps \geq 0
\end{align*}
with respect to $\veps$, and setting $\veps=0$, we deduce:
\[ 0 = (w_0 \nabla \rho_0(x) - w_1 \nabla \rho_1(x)) \cdot \frac{d\phi}{d\veps}(0, x) + (\nabla \rho(x) \cdot \nu(x)  +   \rho(x) \sum_{i}\kappa_i(x)), \quad x \in \partial A.  \]
Expressing $\frac{d\phi}{ d \veps} = v \nu(x)$, we obtain \eqref{eqn:d-dim-evolution-approx}, i.e. the same infinitesimal change as the one coming from the original adversarial problem.

\nc

\blue
\subsection{Global minimizers in higher dimension}

\begin{theorem}
\label{thm:Multid}
    Suppose that the evolution equation \eqref{eqn:d-dim-evolution-exact} admits a classical solution for $\e \in [0,\e_0]$, in the sense that there exists a $C^2$ diffeomorphism $\phi$ which satisfies the equation at every point $x \in \partial  A_0$. Suppose, furthermore, that $\rho_0,\rho_1$ are $C^2$ (i.e. bounded first and second derivatives) and that along the entire interface of the Bayes' classifier we have
    \begin{equation}\label{eqn:strict-crossing-multi-d}
    (w_0 \nabla \rho_0 -w_1 \nabla \rho_1)\cdot \nu \geq c_0 > 0,
    \end{equation}
     \blue
    for some constant $c_0$, where $\nu=\nu(x)$ is the outer unit normal at $x \in \partial A_0$. Then for $\e$ sufficiently small the solution to the evolution equation is also a global minimizer of the adversarial problem, where the metric determining the actions of the adversary is the Euclidean metric.
\end{theorem}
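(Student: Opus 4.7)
The plan is to mirror the strategy of Theorem \ref{thm:GlobalOptimality}: certify the global optimality of $A^*_\e := \phi(\e, A_0)$ by constructing, for each small $\e > 0$, a coupling $\pi_\e \in \Gamma(\nu, \nu^S)$ whose cost saturates the duality inequality from Remark \ref{rem:Duality}, and then invoke Corollary \ref{cor:Risk}. The necessary condition \eqref{eqn:nD-necessary} is preserved along $\partial A^*_\e$ because $\phi$ solves \eqref{eqn:d-dim-evolution-exact}, and it will play the role that \eqref{eqn:1D-Necessary} played in one dimension, while the strict crossing hypothesis \eqref{eqn:strict-crossing-multi-d} replaces Assumption \ref{assump:densities}(iii).

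The first step is to pick $\e_1 > 0$ such that for every $\e \in [0,\e_1]$ the normal map $\Psi(y,s) := y + s\nu(y)$ is a $C^1$ diffeomorphism from $\partial A^*_\e \times (-2\e_1, 2\e_1)$ onto a common tubular neighborhood $T$ of $\partial A^*_\e$. The uniform $C^2$ regularity of $\phi$ bounds the principal curvatures $\kappa_i$ uniformly in $\e$ and $y$, making this possible. In these tubular coordinates, the area formula (Corollary \ref{cor:change-of-variables}) gives
\[
\int_{T} f(x)\,dx = \int_{\partial A^*_\e} \int f(y+s\nu(y)) \prod_{i=1}^{d-1} |1+s\kappa_i(y)|\, ds\, d\mathcal{H}^{d-1}(y),
\]
and the necessary condition \eqref{eqn:nD-necessary} identifies the fibrewise weighted densities $\bar \rho_0^y(s) := w_0\rho_0(y+s\nu)\prod_i |1+s\kappa_i|$ and $\bar\rho_1^y(s) := w_1\rho_1(y+s\nu)\prod_i|1+s\kappa_i|$ at matching heights $s = +\e$ and $s=-\e$ respectively.

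The second step is to perform, fibrewise in $y \in \partial A^*_\e$, exactly the one-dimensional construction from Steps 1--3 of the proof of Theorem \ref{thm:GlobalOptimality} applied to the pair $(\bar\rho_0^y, \bar\rho_1^y)$. The key input is that $\partial_s(\bar\rho_0^y - \bar\rho_1^y)\big|_{s=0,\e=0} = (w_0\nabla\rho_0 - w_1\nabla\rho_1)\cdot\nu \geq c_0 > 0$ by \eqref{eqn:strict-crossing-multi-d}, and this inequality persists for $(y,\e)$ in a neighborhood of $\partial A_0 \times \{0\}$ by the $C^2$ regularity of $\rho_0,\rho_1$ and $\phi$. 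This is the precise higher-dimensional analogue of inequality \eqref{eqn:densitiesDerivatives}, and it yields, uniformly in $y$ and for $\e \leq \e_0$ small, balancing radii $r^\pm(y), \tilde r^\pm(y)$ and monotone maps $\phi_y, \tilde\phi_y$ between sub-intervals of the $s$-fibre, each moving mass at most distance $2\e$. Gluing these fibrewise couplings against $\mathcal{H}^{d-1}\measurerestr \partial A^*_\e$ produces a measure $\pi^0$ on $(\R^d\times\{0,1\})^2$ supported in $\{c_\e = 0\}$, exactly as in the 1D construction. The residual positive measures $\nu_0^R, \nu_1^R$ (totals differing by $w_1-w_0$) are coupled by any product plan $\pi^R$, and the remaining unaffected mass is transported by the identity coupling $\pi^F$; setting $\pi_\e := \pi^0 + \pi^R + \pi^F$ and computing exactly as in Step 4 of Theorem \ref{thm:GlobalOptimality} gives the required saturation of \eqref{eqn:Aux}.

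The main obstacle is the uniform-in-$y$ control needed in the first two steps: one must simultaneously verify (i) that the tubular neighborhood has inner radius bounded below, (ii) that the fibrewise balancing radii $r^\pm(y), \tilde r^\pm(y)$ exist and lie inside this neighborhood, and (iii) that the fibrewise transport displacements are bounded by $2\e$. All three rely on quantitative, uniform-in-$y$ estimates derived from the $C^2$ regularity of $\phi$ (bounding the $\kappa_i$ and their variation along $\partial A^*_\e$), the $C^2$ regularity of the densities, and the positive lower bound $c_0$ in \eqref{eqn:strict-crossing-multi-d}; a quantitative implicit function argument on the equation $\bar\rho_0^y(s) = \bar\rho_1^y(s)$ delivers the needed uniform bounds. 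Once these are established, the measurable selection of $\phi_y, \tilde\phi_y$ and the $\mathcal{H}^{d-1}$-integration of fibrewise couplings are routine, and the cost identity is an essentially verbatim translation of the 1D computation, with $\mathcal{H}^{d-1}$-integration over $\partial A^*_\e$ replacing the sum over the finitely many endpoints $a_i,b_i$.
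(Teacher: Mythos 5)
Your proposal is correct and follows essentially the same route as the paper's proof: both reduce to the one-dimensional construction of Theorem \ref{thm:GlobalOptimality} fibrewise along normal rays in boundary normal coordinates for $\partial A_\e$, using the necessary condition \eqref{eqn:nD-necessary} for the matching at heights $\pm\e$, the strict crossing condition \eqref{eqn:strict-crossing-multi-d} (together with the $C^2$ regularity of $\phi$ and the densities and the fact that $w_0\rho_0=w_1\rho_1$ on $\partial A_0$) for the fibrewise derivative gap that bounds displacements by $2\e$, and then gluing against $\mathcal{H}^{d-1}\measurerestr\partial A_\e$ before concluding via the duality principle. The uniform-in-$y$ estimates you flag as the main obstacle are exactly the ingredients the paper invokes (uniformly bounded curvatures from the $C^2$ diffeomorphism, a tubular neighborhood of radius independent of $\e$, and the lower bound $c_0$), so no genuinely different idea is involved.
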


\begin{proof}
To begin, we notice that, by \eqref{eqn:strict-crossing-multi-d} along with the implicit function theorem, the boundary of $A_0$ is locally the graph of a $C^2$ function. Classical geometric results imply that for any point $\tilde x$ in a $\delta$ neighborhood of $\partial A_0$ there exists a unique closest point $P_{\partial A_0}(\tilde x)$ in the boundary of $\partial A_0$, and \blue that the signed distance function is $C^3$ in that same $\delta$ neighborhood of $\partial A_0$. \blue

Under the assumption that $\phi$ is $C^2$, the chain rule computation shown in Section \ref{sec:multi-d-evolution}, along with the fact that $A_0$ is the Bayes' classifier, implies that the necessary condition \eqref{eqn:multi-d-necessary} is satisfied at every point in the boundary of $A_\e$, namely for every $x \in \partial A_\e$
\[
0 = \left(w_0\rho_0(x + \e \nu(x)) \prod_{i=1}^{d-1} |1 + \kappa_i(x) \e| - w_1\rho_1(x - \e \nu(x)) \prod_{i=1}^{d-1} |1 - \kappa_i(x) \e| \right).
\]

Next we notice that we may express points within the $\delta$ neighborhood of $\partial A_0$, using what is called a \emph{normal coordinate system}. In particular, for any $\tilde x$ in that neighborhood, we may (uniquely) represent $\tilde x = P_{\partial A_0}(\tilde x) + \tilde{d}_{A_0}(\tilde x) \nu( P(\tilde x))$. \blue This essentially allows us to locally transform from points in a neighborhood of the boundary into flattened geometry. We note that, since $\phi$ is $C^2$ and starts as the identity mapping, the set $A_\e$ also has boundary which is the graph of a $C^2$ function and admits local normal boundary coordinates, for $\e$ sufficiently small, and for a $\delta$ neighborhood of the boundary of $A_\e$ which is independent of $\e$. From this point on we will always assume that $\e$ is small enough that this representation is possible.


Our goal now will be to construct a transportation plan which certifies the optimality of the set $A_\e$. In doing so, as in the one-dimensional proof, it suffices to construct mappings locally near the boundary which transfer mass from $\rho_0$ to $\rho_1$ (and vice versa), and which move mass at most $2\e$ distance. As we will see, our construction reduces the problem almost entirely to the one-dimensional setting.\blue

\blue
In particular, fix $x_0 \in \partial A_0$ let $z_0= \phi(\veps, x_0)$, and consider points of the form $z_0 + t \nu(z_0) = z(t,z_0)$, for $t \in (-\delta,\delta)$. After changing variables (to be precise, in the boundary normal coordinates associated with $\partial A_\e$), the density associated with a particular $t$ for fixed $z_0$ is given by
\[
\tilde \rho_i(t|z_0) := \rho_i(z(t,z_0)) \prod_{i=1}^{d-1} |1+t \kappa_i(z_0) |.
\]
More precisely, using the \blue coarea formula \cite{evans2015measure}  and Corollary \ref{cor:change-of-variables} in the Appendix, we can write
\begin{align*}
\int_{ |\tilde{d}_{A_\veps}(z)| \leq \delta    }   g(z) \rho_{i}(z) dz  & = \int_{-\delta}^{\delta} \left( \int_{\tilde{d}_{A_\veps}(z) =s} g(z)\rho_i(z) d \HH^{d-1}(z) \right)  dt
\\& =\int_{-\delta}^{\delta} \left( \int_{\partial A_\veps } g(z(t,z_0)) \rho_i(z(t,z_0)) \prod_{i=1}^{d-1} |1+t \kappa_i(z_0) | d \HH^{d-1}(z) \right)  dt
\\&=  \int_{\partial A_\veps } \int_{-\delta}^{\delta} g(z(t,z_0))\tilde \rho_i(t|z_0)   dt d\HH^{d-1}(z),
\end{align*}
for arbitrary smooth and bounded test function $g$. This provides a representation of the distribution $\rho_i dx $ restricted to the set $\{ |\tilde{d}_{A_\veps}(z)| \leq \delta  \}$ in normal coordinates, and in particular, up to rescaling, we can now interpret the function $\tilde {\rho}_i(\cdot|z_0)$ as the conditional distribution of $t \in [-\delta, \delta]$ given $z_0$.

Now for any fixed $\e$ and $x_0$ (i.e. fixed $z_0= \phi(\veps, x_0)$) we will construct a transportation plan using $\tilde \rho(\cdot|z_0)$: in the original high-dimensional problem this means that in the set $\{ |\tilde{d}_{A_\veps}(z)| \leq \delta  \}$, our transportation plan only transports along rays normal to the boundary of $\partial A_\e$. Notice that outside of $\{ |\tilde{d}_{A_\veps}(z)| \leq \delta  \}$, on the other hand, we may transport in any way we want so as to match the marginal constraints (just as in the 1d setting): this is why we focus on the set $\{ |\tilde{d}_{A_\veps}(z)| \leq \delta  \}$ exclusively. 

By the necessary condition \eqref{eqn:multi-d-necessary} we have the necessary matching condition, namely that $w_0\tilde \rho_0(-\e| z_0) = w_1\tilde \rho_1(\e|z_0)$. All that remains is to verify that we can transport $w_0 \tilde \rho_0(\cdot|z_0)$ on the interval $[-\e,\e]$ on to $w_1\tilde \rho_1(\cdot |z_0)$ without moving more than distance $2\e$ in $t$. To this end, we notice that, by the assumption \eqref{eqn:strict-crossing-multi-d},
 $|(w_1\nabla \rho_1(\tilde x) -w_0\nabla \rho_0(\tilde x' )) \cdot \nu(x_0)| > c_0/2 >0$ for all $\tilde x, \tilde x'$ in a neighborhood of $x_0 \in \partial A_0$. Using the fact that $\phi$ is $C^2$ and that is the identity for $\e=0$, then implies that the same inequality, with constant $c_0/4$ holds in a neighborhood of $\phi(x_0,\e)$, for small enough $\e$, and for the size of the neighborhood independent of $\e$. We may then compute
\begin{align*}
w_1 \tilde \rho_1'(t_1|z_0) - w_0 \tilde \rho_0'(t_0|z_0) &= (w_1 \nabla \rho_1(z(t_1,z_0)) -w_0 \nabla \rho_0(z(t_0,z_0)))\cdot \nu(z_0)  \\
&+ w_1 \rho_1(z(t_1,z_0)) \sum_i \kappa_i(z_0) \prod_{j \neq i} |1+t_1\kappa_j(z_0) | 
\\& - w_0 \rho_0(z(t_0,z_0)) \sum_i \kappa_i(z_0) \prod_{j \neq i} |1+ t_0 \kappa_j(z_0) |.
\end{align*}
The first of these terms is bounded from below by $c_0/4$. Recalling that $w_0 \rho_0(x_0) = w_1\rho_1(x_0)$ (as $A_0$ is a Bayes' classifier), we may bound the magnitude of the remaining terms by $|t_i|(1+\delta)$: in particular, if $t_i$ is of order $\e$ ( notice that $\kappa_i$ is uniformly bounded since $\phi$ was assumed $C^2$) then we may conclude that  $w_1 \tilde \rho_1'(t_1|z_0) - w_0 \tilde \rho_0'(t_0|z_0) > c_0/8 $ for small enough $\e$. This then allows us to directly use the one-dimensional construction from the proof of Theorem \ref{thm:GlobalOptimality} in order to construct appropriate transportation plans for the $\tilde \rho(\cdot| z_0)$. By constructing such a plan along normal rays corresponding to every $z_0 \in \partial A_\veps$, we then have a candidate transportation plan, which transports points near the boundary at most distance $2\e$. Using the same argument via the fundamental theorem of calculus and the duality principle as in the proof of Theorem \ref{thm:GlobalOptimality}, we obtain the desired result.
\end{proof}
\nc

\blue

\begin{remark}
  In the previous proof we assumed that the solutions to the partial differential equation existed. We suspect that the local existence and uniqueness of solutions can be proved by an appropriate non-linear PDE argument under the assumption \eqref{eqn:strict-crossing-multi-d}, but the technical details of such a proof lie outside of the scope of this paper.
  
  We also notice that the same conclusions about topology which we indicated in one dimension would also hold in the setting of Theorem \ref{thm:Multid}. Indeed, minimizers will not change their topology for sufficiently small $\e$, the size of which should depend upon the size of $c_0$ in \eqref{eqn:strict-crossing-multi-d} and upon the smoothness of the underlying densities. Of course the topologies are potentially much more complicated in higher dimension, and the evolution of the topology of the minimizing classifiers is an intriguing potential future direction.
\end{remark}
\nc

\subsection{Illustration in two dimensions}\label{sec:2Dex}

Here we show a basic numerical example of the geometric evolution \eqref{eqn:d-dim-evolution-approx} in two dimensions. This example is intended to be an illustration, rather than a detailed computational study. Such a study would require careful numerical analysis, which lies outside of the scope of this work.

We consider two different classes $\rho_1 \sim N((-.5,-2),\Sigma) + N((-.5,.5),\Sigma)$ and $\rho_2 \sim N((.5,-.5),\Sigma) + N((.5,2),\Sigma$, where $\Sigma = .2 I$, and $w_0 = w_1 = .5$. The Bayes classifier boundary, along with contours of the misclassification error $w_1\rho_1 - w_0 \rho_0$ are shown in Figure \ref{fig:risk-contours}.

\begin{figure}
\centering
\includegraphics[width=.8\textwidth]{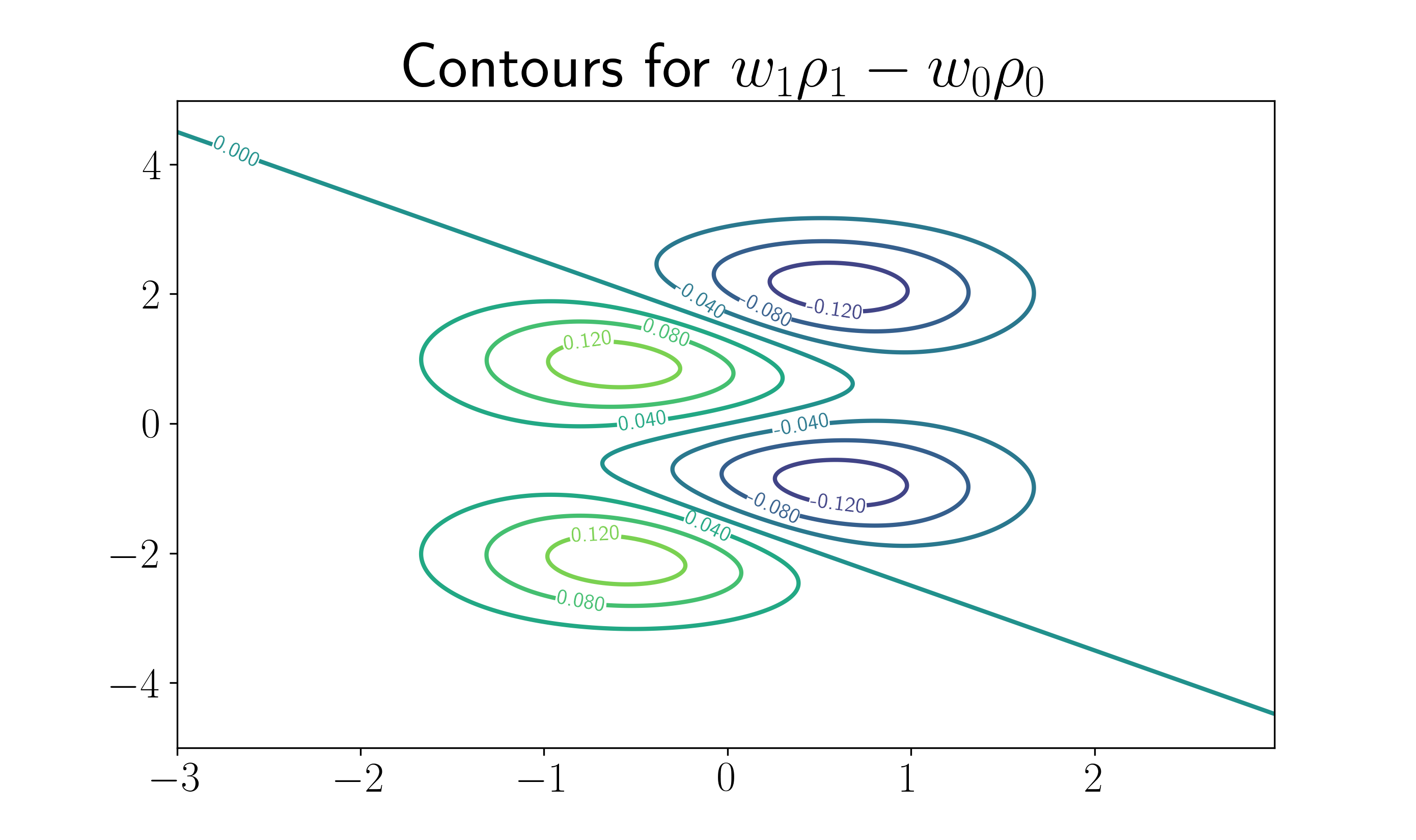}
\caption{The contours of the function $w_1\rho_1 - w_0\rho_0$ for the example in Section \ref{sec:2Dex}. The contour corresponding to $w_1\rho_1 - w_0\rho_0=0$ is the s-shaped curve, and represents the decision boundary for the Bayes classifier.}
\label{fig:risk-contours}
\end{figure}

We then use a modified version of the scheme from \cite{merriman1992diffusion} to track the evolution of the decision boundary under the evolution equation \eqref{eqn:d-dim-evolution-approx} for different values of $\e$. 
These curves are displayed in Figure \ref{fig:geometric-evolution}, next to the curves evolved via standard mean curvature flow as a point of reference.

\begin{figure}[p] 
\centering
\includegraphics[width=.8\textwidth]{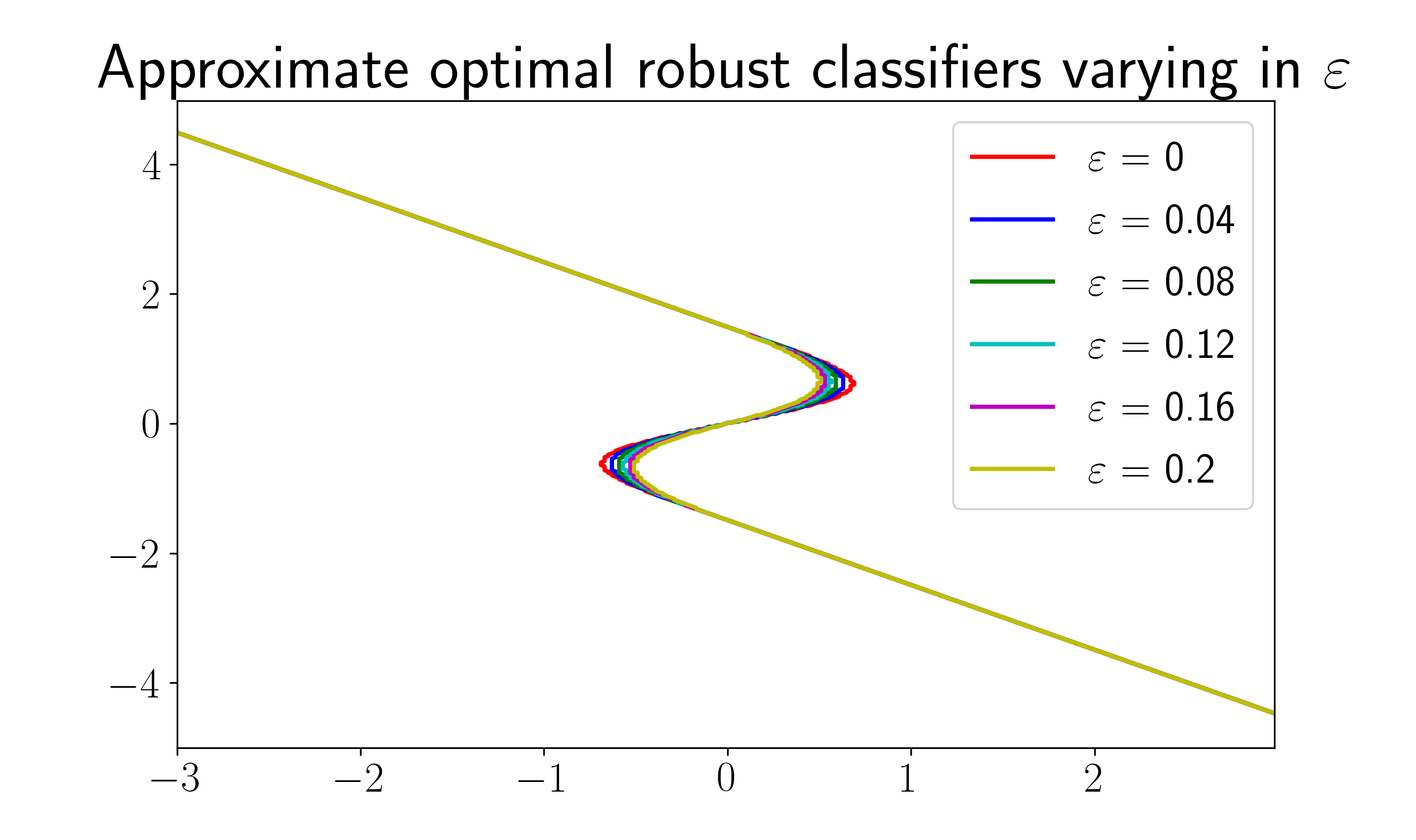}

\includegraphics[width=.8\textwidth]{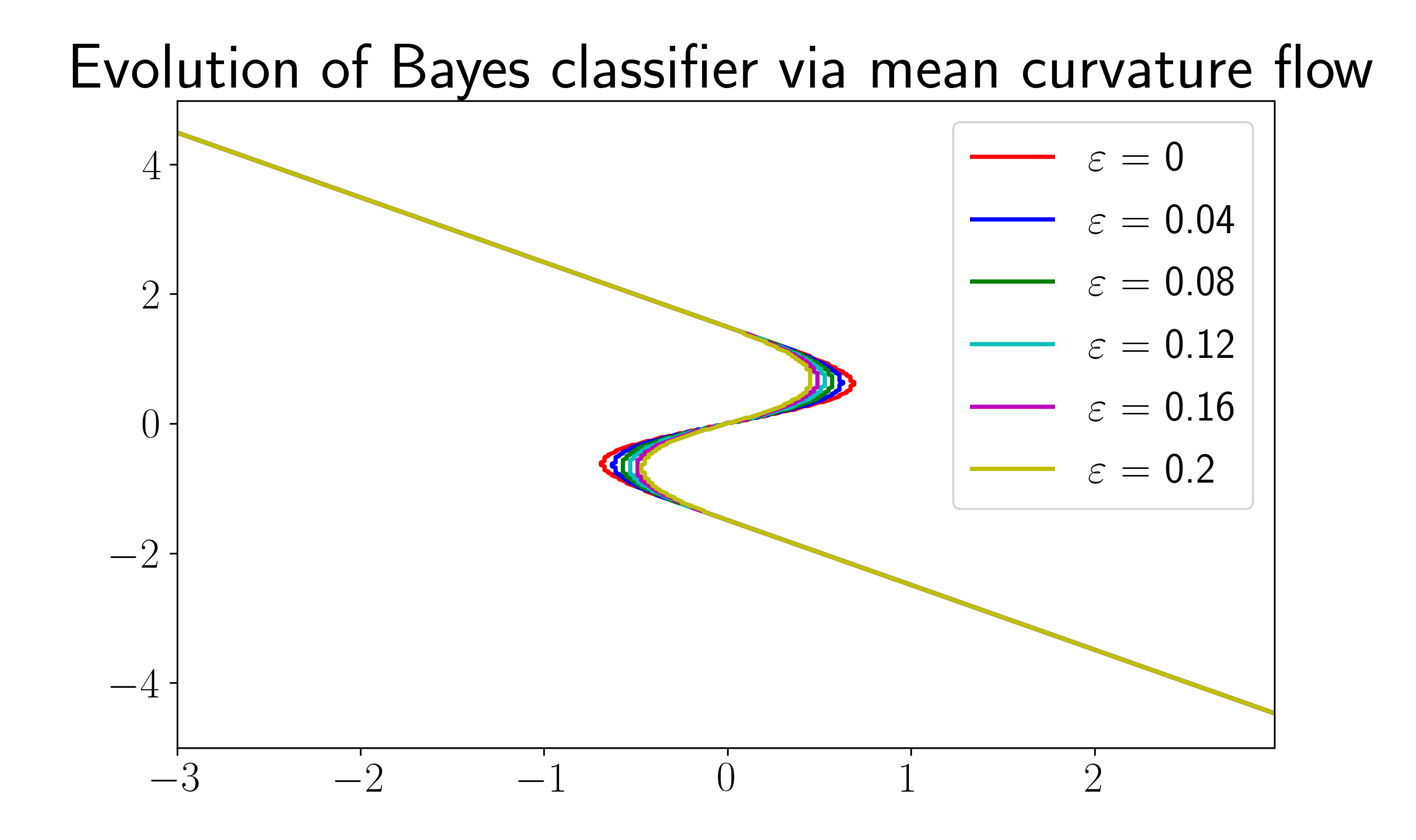}
\caption{The first set of curves represent the evolution of the decision boundary according to the geometric evolution \eqref{eqn:d-dim-evolution-approx}, which includes a weighted curvature flow and a drift term. The second set of curves is the geometric evolution following standard mean curvature flow. In this case the curves are largely the same, with only a very small damping of the curvature flow in the first case (which makes sense since $\nabla \rho$ is of modest size in this example).}
\label{fig:geometric-evolution}
\end{figure}

\section{Conclusion}\label{sec:conclusion}

This work provides a first analysis of the evolution equations associated with an ensemble of adversarial classification problems. In particular, we have shown that for the model considered here, the evolution equations in one dimension are completely able to characterize the global minimizer for small enough $\e$ (the power level of the adversary) without needing to conduct any optimization. In higher dimension the same evolution equations are linked with mean curvature flow and allude to implicit regularization.

This work suggests many promising future directions, both in terms of analysis and implementation. We list a few here, some of which are the topic of current investigation.

\begin{enumerate}
\item  \blue In this work, the connection between the evolution equations and global minimizers only holds for small $\e$, and we made no attempt to quantify the size of $\e_0$ in our theorem. This is partly unavoidable given the generality of the input distributions and the learned classifiers. We expect that the results in our paper should hold locally in $\e$, in the sense that if the adversarial problem admits a unique solution at $\tilde \e$ then the solution of the adversarial problem should be characterized by the evolution equation in a neighborhood of $\tilde \e$. We do not expect the theorem to generally hold for large ranges of $\e$, as topological changes can cause non-uniqueness of optimal solutions for certain (likely discrete) values of $\e$. It may be possible to use primal-dual methods to numerically detect when topological changes occur, or, in other words, in what ranges of $\e$ the evolution equations describe optimal solution families. Investigating such methods could provide a lot more information about how large $\e$ may be in our theorems.
\item In higher dimensions, we made the additional assumption that the evolution equations admitted smooth solutions. Justifying this assumption would likely require careful analysis of the partial differential equation. Similarly, it would be interesting to develop more efficient numerical methods for the actual evolution equation in higher-dimensional settings.\nc 
\item The smoothness of minimizers, and whether curvature is implicitly bounded, is a natural question. This is not obvious, as the objective functional of the adversarial problem does not impose a priori regularity. Similarly, the evolution of singularities (and whether they may disappear or appear) is completely unclear.
\item Various notions of distance have been used in studying adversarial examples. Notable examples include the $\ell_\infty$ distance. The effect of such a distance on the evolution equations that we describe in this work is an interesting question to study.
\item The problem of a data perturbing adversary for multiple labels, and the resulting evolution equations, is also a compelling, open problem.
\item Finally, here we have considered one specific example of adversarial classification model, but many others are possible. Likewise, we have restricted our attention to the classification problem with 0-1 loss, while one may also study other settings like regression under different loss functions.  Exploring other settings and studying their connection to other geometric flows is a promising direction of research that we hope to explore.  Our hope is to provide deeper insights into the properties of different robust learning methodologies. 
\end{enumerate}

\nc

\appendix
\section{Properties of the signed distance function}

We recall the definition of the signed distance function
\[
\tilde d_E(x) = \begin{cases} d(x,E) &\text{ if } x \not \in E \\ -d(x,E^c) &\text{ for } x \in E \end{cases}
\]

The following properties are classical and may be found in, e.g. \cite{ambrosio1998curvature}:

\begin{proposition}
\label{eq:PropsDistance}
Let $E$ be an open set with $C^2$ boundary. Then on some neighborhood $U$ of $\partial E$ we have the following:
\begin{itemize}
  \item $\tilde d \in C^2(U)$.
\item Each $y$ in $U$ has a unique closest point $P(y)$ in $\partial E$, and $P$ is a continuous function in $y$.
\item We have, for $y \notin \partial E$, that $\nabla \tilde d = \frac{P(y)-y}{\tilde d(Y)}$. For $y \in \partial E$ the outward unit normal is given by $\nu(y) = \nabla \tilde d(y)$.
\item For $y \in \partial E$, the matrix $D^2 \tilde d = \blue \frac{\partial \nu }{\partial x} \nc$ has $1$ eigenvalue that is equal to zero (with eigenvector in the normal direction $\nu$), and $d-1$ eigenvalues with eigenvectors spanning the tangent directions. \blue These eigenvalues are called the \emph{principal curvatures} of the surface, and are denoted $\kappa_i$. \nc
\end{itemize}
\end{proposition}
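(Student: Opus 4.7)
The plan is to reduce every claim to the inverse function theorem applied to the normal exponential map together with routine chain-rule computations and an elementary Taylor argument bounding the local reach. Fix $x_0 \in \partial E$ and, since $\partial E$ is $C^2$, represent $\partial E$ locally as the graph of a $C^2$ function after a suitable rigid motion; this yields a $C^1$ outward unit normal field $\nu$ on $\partial E$ near $x_0$. Define $F:\partial E \times \R \to \R^d$ by $F(z,t) = z + t\,\nu(z)$. Its differential at $(x_0,0)$ is the identity (the tangent directions to $\partial E$ at $x_0$ contribute the identity on $T_{x_0}\partial E$ and the $t$-direction contributes $\nu(x_0)$), so by the inverse function theorem $F$ is a $C^1$ diffeomorphism from a neighborhood of $(x_0, 0)$ onto a neighborhood of $x_0$ in $\R^d$. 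A standard covering argument, or locally choosing the tube radius smaller than the reciprocal of a bound on $|\nabla \nu|$, produces a uniform tubular neighborhood $U$ of $\partial E$ on which $F$ is a bijection.

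For each $y \in U$ write $(z,t) = F^{-1}(y)$ and set $P(y) := z$. A Taylor expansion of $|y-z'|^2$ around $z' = z$ shows that for any $z'\in\partial E$ near $z$ the leading-order correction in tangential directions is the identity plus a term of order $|t|$ times the second fundamental form; for $|t|$ below the local reach this remains strictly positive, forcing $|y-z'|>|t|$ unless $z'=z$. Points far from $z$ are ruled out by shrinking $U$. This establishes uniqueness of the nearest point, and continuity (in fact $C^1$ regularity) of $P$ is inherited from $F^{-1}$. By construction $|y-P(y)|=|t|$ with $\operatorname{sgn}(t)=\operatorname{sgn}(\tilde d_E(y))$, so $\tilde d_E(y)=t$ and $y - P(y) = \tilde d_E(y)\,\nu(P(y))$.

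To obtain the gradient, I would apply the chain rule to the identity $y = P(y) + \tilde d_E(y)\,\nu(P(y))$. Taking the inner product with $\nu(P(y))$ and using that $\partial P/\partial y$ takes values in the tangent space $T_{P(y)}\partial E$ (since $P$ maps into $\partial E$), all terms involving derivatives of $P$ drop out and one reads off $\nabla \tilde d_E(y) = \nu(P(y))$ for $y\notin \partial E$, equivalently the displayed formula $\nabla \tilde d_E(y) = (y-P(y))/\tilde d_E(y)$ up to the sign convention of the paper. Continuity then yields $\nu(y) = \nabla \tilde d_E(y)$ for $y \in \partial E$, and the eikonal relation $|\nabla \tilde d_E| \equiv 1$ on $U$ follows immediately. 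The $C^2$ regularity of $\tilde d_E$ on $U$ then comes for free, since $\nabla \tilde d_E = \nu \circ P$ is a composition of $C^1$ maps.

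Finally, differentiating $|\nabla \tilde d_E|^2 \equiv 1$ gives $D^2 \tilde d_E(y)\,\nabla \tilde d_E(y) = 0$ throughout $U$; at $y \in \partial E$ this exhibits $\nu(y)$ as an eigenvector of $D^2 \tilde d_E(y) = \partial\nu/\partial x$ with eigenvalue $0$. Symmetry of $D^2 \tilde d_E$ forces the remaining $d-1$ eigenvectors to span the tangent space $T_y\partial E$, and a direct computation identifies the restriction of $D^2 \tilde d_E(y)$ to that tangent space with the classical shape operator of $\partial E$, whose eigenvalues are by definition the principal curvatures $\kappa_i$. I expect the main technical obstacle to be the uniqueness of the closest-point projection, since it requires a quantitative lower bound on the reach of $\partial E$; once that is in hand, every other claim follows from the inverse function theorem and routine differentiation.
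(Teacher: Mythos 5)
The paper does not actually prove this proposition: it is stated as classical and delegated to the reference \cite{ambrosio1998curvature}. Your proposal supplies a correct, self-contained version of the standard tubular-neighborhood argument, and all the main steps are sound: the inverse function theorem applied to $F(z,t)=z+t\,\nu(z)$, the second-order expansion of $|y-z'|^2$ giving uniqueness of the foot point below the reach, the chain rule on $y=P(y)+\tilde d(y)\,\nu(P(y))$ combined with $DP$ mapping into $T_{P(y)}\partial E$ and $\langle D\nu\,w,\nu\rangle=0$ to get $\nabla\tilde d=\nu\circ P$, the bootstrap $\tilde d\in C^2$ from $\nu\circ P$ being a composition of $C^1$ maps, and the eikonal identity $|\nabla\tilde d|^2\equiv 1$ forcing $\nu$ into the kernel of the (symmetric) Hessian, whose tangential restriction is the Weingarten map. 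Two small remarks. First, your step ruling out far-away competitors by ``shrinking $U$'' can be made cleaner: any closest point $z'$ to $y=F(z,t)$ automatically satisfies $|z'-z|\le|z'-y|+|y-z|\le 2|t|$, so competitors are forced into the local chart without further shrinking; also note that a \emph{uniform} tube radius needs compactness of $\partial E$ (or uniform $C^2$ bounds), though the proposition only asks for some neighborhood, which the local construction already provides. Second, your derivation yields $\nabla\tilde d(y)=(y-P(y))/\tilde d(y)$, which is the correct formula under the paper's sign convention ($\tilde d>0$ outside $E$, so the gradient points outward); the displayed $(P(y)-y)/\tilde d(y)$ in the proposition has the sign reversed and appears to be a typo, so your ``up to the sign convention'' hedge is resolved in your favor.
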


The principal curvatures of a surface may be viewed as inverses of the principal radii. The principal radii grow (or shrink depending on their sign) linearly in their distance from $\partial E$. By using these facts and applying a classical change of variables to the transformation $T(x) = x + \e \nu(x)$, we obtain the following formula:
\begin{corollary}\label{cor:change-of-variables}
 If $\partial E$ is a $C^2$ surface then for $\e$ sufficiently small
  \[
    \int_{\tilde d_A(y) = \e} g(y) \,d\mathcal{H}^{d-1}(y) = \int_{\tilde d_A(x) = 0} g(x + \e \nu(x)) \prod_{i=1}^{d-1}|1 + \e \kappa_i(x)| \,d\mathcal{H}^{d-1}(x).
  \]
\end{corollary}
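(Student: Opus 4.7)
The plan is to realize the right-hand side as the result of an area-formula change of variables applied to the mapping $T_\veps(x) := x + \veps \nu(x)$, which sends $\partial E = \{\tilde d_A = 0\}$ diffeomorphically onto the level set $\{\tilde d_A = \veps\}$ for $\veps$ sufficiently small. The main tasks are (i) verifying the diffeomorphism property and the fact that $T_\veps(\partial E)$ equals the target level set, and (ii) computing the tangential Jacobian of $T_\veps$ in terms of the principal curvatures.

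First I would invoke Proposition \ref{eq:PropsDistance} in a neighborhood $U$ of $\partial E$. Since $\tilde d_A \in C^2(U)$, the outward unit normal $\nu = \nabla \tilde d_A$ is $C^1$ on $U$, and in particular restricts to a $C^1$ vector field on $\partial E$. Consequently $T_\veps : \partial E \to \R^d$ is $C^1$. To see that $T_\veps(\partial E) = \{\tilde d_A = \veps\}$, note that for $y = x + \veps \nu(x)$ with $x \in \partial E$ and $|\veps|$ small, the point $x$ is the unique closest point in $\partial E$ to $y$ (by the third bullet of Proposition \ref{eq:PropsDistance}), so $\tilde d_A(y) = \veps$. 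Conversely, any $y \in U$ with $\tilde d_A(y) = \veps$ can be written uniquely as $P(y) + \veps \nu(P(y))$, giving surjectivity. Injectivity of $T_\veps$ follows from the uniqueness of the closest point. Thus $T_\veps$ is a $C^1$ bijection from $\partial E$ onto $\{\tilde d_A = \veps\}$ for $\veps$ in some interval around $0$.

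Next I would apply the area formula (see e.g. Chapter 11 of \cite{Maggi-Book}) to the $C^1$ map $T_\veps : \partial E \to \R^d$. This gives
\[
\int_{T_\veps(\partial E)} g(y)\, d\HH^{d-1}(y) = \int_{\partial E} g(T_\veps(x))\, J^{\partial E} T_\veps(x)\, d\HH^{d-1}(x),
\]
where $J^{\partial E} T_\veps(x)$ denotes the tangential Jacobian, namely $|\det(dT_\veps(x)|_{T_x \partial E})|$ with the differential taken as a linear map from $T_x \partial E$ to $T_{T_\veps(x)} \{\tilde d_A = \veps\}$. For $\xi \in T_x \partial E$ we have $dT_\veps(x)[\xi] = \xi + \veps\, d\nu(x)[\xi]$, and by the last bullet of Proposition \ref{eq:PropsDistance} the restriction of $d\nu(x) = D^2 \tilde d_A(x)$ to $T_x \partial E$ is a symmetric endomorphism whose eigenvalues are exactly the principal curvatures $\kappa_1(x), \ldots, \kappa_{d-1}(x)$. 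Thus $dT_\veps(x)|_{T_x \partial E} = I + \veps\, d\nu(x)|_{T_x \partial E}$ has eigenvalues $1 + \veps \kappa_i(x)$ and Jacobian $\prod_{i=1}^{d-1} |1 + \veps \kappa_i(x)|$.

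Combining the identification $T_\veps(\partial E) = \{\tilde d_A = \veps\}$ with the Jacobian computation yields the stated formula. The main technical point is the identification of the eigenvalues of $d\nu$ on the tangent space with the principal curvatures, which is precisely what Proposition \ref{eq:PropsDistance} provides; the rest is a standard application of the area formula, and for $\veps$ small enough the factors $1 + \veps \kappa_i$ are positive so the absolute values are automatic but harmless.
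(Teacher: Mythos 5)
Your proposal is correct and follows essentially the same route the paper indicates: the paper's justification is precisely "a classical change of variables applied to $T(x) = x + \e\nu(x)$," using the facts from Proposition \ref{eq:PropsDistance} that the eigenvalues of $D^2\tilde d$ on the tangent space are the principal curvatures. You have simply written out the details (diffeomorphism onto the level set, tangential Jacobian $\prod_i|1+\e\kappa_i|$) that the paper leaves implicit.
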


\blue 
The following lemma, sometimes called the ``layer cake representation'', is a classical lemma from measure theory (see e.g. Chapter 1 in \cite{lieb2001analysis}), and may be directly proved by using the Fubini-Tonelli theorem. 
\begin{lemma}\label{lem:layer-cake}
Given a non-negative function $f$ on a measure space $(X,\mu)$ the following identity holds:
\[
\int_X f(x) \,d\mu(x) = \int_0^\infty \mu(\{ x : f(x) > t\}) \,dt.
\]
\end{lemma}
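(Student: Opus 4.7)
The plan is to express the nonnegative function $f$ pointwise as an integral of indicator functions over the half-line, and then apply the Fubini--Tonelli theorem to interchange the order of integration. This reduces the layer-cake identity to a transparent manipulation of indicators.

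First, I would record the elementary pointwise identity
\[
f(x) = \int_0^{f(x)} dt = \int_0^\infty \mathds{1}_{\{t < f(x)\}}\, dt = \int_0^\infty \mathds{1}_{\{f(x) > t\}}\, dt,
\]
valid for every $x \in X$ with $f(x) \geq 0$ (including the case $f(x) = +\infty$, where both sides equal $+\infty$ by monotone convergence). Integrating with respect to $\mu$ over $X$ gives
\[
\int_X f(x)\, d\mu(x) = \int_X \int_0^\infty \mathds{1}_{\{f(x) > t\}}\, dt\, d\mu(x).
\]

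Next, I would justify swapping the order of integration. Since $f$ is $\mu$-measurable, the set $\{(x,t) \in X \times [0,\infty) : f(x) > t\}$ is measurable with respect to the product $\sigma$-algebra (it is the preimage of $(0,\infty)$ under the measurable map $(x,t) \mapsto f(x) - t$), so the integrand $\mathds{1}_{\{f(x) > t\}}$ is jointly measurable. Being nonnegative, the Tonelli half of Fubini--Tonelli applies and yields
\[
\int_X \int_0^\infty \mathds{1}_{\{f(x) > t\}}\, dt\, d\mu(x) = \int_0^\infty \int_X \mathds{1}_{\{f(x) > t\}}\, d\mu(x)\, dt = \int_0^\infty \mu(\{x : f(x) > t\})\, dt.
\]

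Combining the two displays yields the claim. There is no real obstacle here: the only subtlety is the joint measurability needed to invoke Tonelli, which follows from standard product-$\sigma$-algebra arguments, and the case of infinite values is handled automatically because both sides of the identity are interpreted in $[0,+\infty]$. The argument also works without the assumption that $\mu$ is $\sigma$-finite, because Tonelli's theorem for nonnegative integrands does not require $\sigma$-finiteness when stated in the form used above.
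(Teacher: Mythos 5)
Your proof is correct and is exactly the route the paper indicates: the paper gives no detailed argument, citing Lieb--Loss and remarking only that the identity ``may be directly proved by using the Fubini--Tonelli theorem,'' which is what you carry out via the pointwise identity $f(x)=\int_0^\infty \mathds{1}_{\{f(x)>t\}}\,dt$ followed by Tonelli. One small quibble with your closing remark: the standard Tonelli theorem does assume $\sigma$-finiteness, so to cover truly arbitrary $\mu$ you would instead approximate $f$ monotonically by simple functions; this is immaterial here since the measures to which the lemma is applied in the paper are probability measures.
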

\nc

%
%

\bibliographystyle{alpha}
\bibliography{AML}

\end{document}